\documentclass{article}
\usepackage[letterpaper,margin=1in]{geometry} 
\usepackage[utf8]{inputenc} 
\usepackage[T1]{fontenc}    
\usepackage{lmodern}        
\usepackage{amsmath,amsthm,amssymb}
\usepackage{mathtools}
\usepackage{booktabs}       
\usepackage{microtype}      
\usepackage{xcolor}         
\usepackage{graphicx}
\usepackage{hyperref}       
\usepackage{algorithm}
\usepackage[noend]{algpseudocode}
\usepackage{enumitem}
\usepackage{tikz}
\usetikzlibrary{positioning,arrows.meta,shapes.geometric}

\theoremstyle{plain}
\newtheorem{theorem}{Theorem}

\newtheorem{corollary}{Corollary}
\theoremstyle{definition}

\theoremstyle{remark}

\newcommand{\method}{MA-SBI}
\newcommand{\dhat}{\hat{\delta}_\psi}
\newcommand{\R}{\mathbb{R}}
\newcommand{\E}{\mathbb{E}}
\newcommand{\Var}{\operatorname{Var}}
\newcommand{\MI}{I(\delta^*; z)}

\title{MA-SBI: Misspecification-Aware Simulation-Based Inference\\via Side-Channel Guidance}

\author{
Arunkumar V\textsuperscript{1} \quad Manoranjan Gandhudi\textsuperscript{2} \quad Gangadharan G.~R.\textsuperscript{3} \\[2pt]
Arun Prakash\textsuperscript{4} \quad S.~Senthilkumar\textsuperscript{1} \\[6pt]
\normalsize
\textsuperscript{1}University College of Engineering, Anna University Tiruchirappalli, Tamil Nadu, India \\
\textsuperscript{2}Central University of Karnataka, India \qquad
\textsuperscript{3}National Institute of Technology Tiruchirappalli, India \\
\textsuperscript{4}School of Computer \& Systems Sciences, Jawaharlal Nehru University, New Delhi, India \\[5pt]
\normalsize\ttfamily arunkumarv1530@gmail.com \quad gmanoranjan@cuk.ac.in \quad ganga@nitt.edu \\
\ttfamily arunprakash@mail.jnu.ac.in \quad ssk@aubit.edu.in
}
\date{}
\begin{document}
\maketitle

\begin{abstract}
Simulation-based inference (SBI) of latent parameters is often
hindered by simulator misspecification, the mismatch between
simulated and real-world observations caused by inherent modeling
simplifications. RoPE, the recent state-of-the-art for robust SBI,
addresses this through optimal transport between learned
representations of real and simulated observations, but requires
ground-truth parameter calibration pairs that are typically
unavailable in the very settings where SBI is needed. What
practitioners do have is unstructured side-information such as
regime labels, instruction text, and policy bulletins. We propose
Misspecification-Aware Simulation-Based Inference (\method{}), a
calibration-free framework that turns this side-channel into a
posterior correction. A learned corrector maps side-channel text to
an observation-space shift applied before any pre-trained amortized
posterior, requiring no retraining and no parameter ground-truth.
Our main theorem bounds achievable bias reduction by the mutual
information between misspecification and side-channel, with a
non-vacuous constant that extends to all sub-Gaussian noise via
Donsker--Varadhan. On hide-the-calibration benchmarks, \method{}
with text alone matches the oracle posterior across 10 seeds and
two backbones (TOST equivalence), while RoPE given more
data does not. The two approaches are complementary: where
misspecification is structural and recoverable from parameter
pairs, RoPE dominates, as the theory predicts. A stochastic variant
improves posterior-predictive log-likelihood on real COVID and
OxCGRT epidemiological data, and correctly leaves the posterior
unchanged on a well-specified cognitive-science corpus.
\end{abstract}

\section{Introduction}\label{sec:intro}

Simulation-based inference \cite{cranmer2020frontier} has become the
dominant framework for bayesian inference when the simulator's
likelihood is intractable but samples are available. Neural
posterior estimation (NPE) amortizes this inference by training a
conditional density estimator $\hat{q}(\theta \mid y)$ on simulated
pairs $(\theta, y_\text{sim})$, which at evaluation time produces
approximate posteriors with a single forward pass. SBI is now
standard in particle physics, epidemiology, cognitive neuroscience,
and galactic dynamics.

The method's persistent weakness is simulator misspecification.
Every simulator is an approximation, and when real observations
$y_\text{obs}$ are drawn from a process that differs from the
simulator's, posterior estimates become biased, overconfident, or
both. Even small amounts of misspecification can produce severe
posterior collapse on real-world benchmarks
\cite{wehenkel2025rope}.

Recent robust-SBI methods have approached this problem from three
directions:
\begin{itemize}[leftmargin=*,itemsep=2pt]
  \item \textbf{Calibration-based methods} (RoPE
  \cite{wehenkel2025rope}, FRISBI \cite{senouf2025frisbi}, FMCPE
  \cite{ruhlmann2025fmcpe}) use ground-truth parameter pairs
  $(\theta^*, y^*)$ to learn a simulator-reality gap, typically via
  optimal transport or flow matching.
  \item \textbf{Noise-model methods} (NNPE \cite{ward2022nnpe},
  robust summary statistics \cite{huang2023robust}) specify
  misspecification as spike-and-slab contamination or train statistics
  robust to perturbation.
  \item \textbf{Prior-adaptation methods} (PriorGuide
  \cite{yang2026priorguide}) modify the posterior at test time via
  guidance, without retraining.
\end{itemize}

All three approaches require something the practitioner may not
have, whether ground-truth parameter measurements (typically
unavailable when SBI is used), an explicit functional form for the
noise, or a meaningful alternative prior. None exploits unstructured
side-information about the operating regime, which is the kind of
input practitioners actually do have, such as instruction text in a
behavioural experiment, World health organization bulletins for an
emerging variant, or the active emissions-policy regime in a climate
run. The three applications motivate Sec.~\ref{subsec:correction}.

\paragraph{Our contribution.} We introduce \method{} as a
parameter-calibration-free framework that uses side-channel
information $z$ (typically text embeddings) to index a learned
correction $\dhat(z)$ applied to the observation before it is fed
to a pre-trained amortized posterior (Figure~\ref{fig:schematic}).
The corrected posterior is
\begin{equation*}
  \tilde{q}(\theta \mid y_\text{obs}, z)
  \;:=\;
  \hat{q}\!\left(\theta \;\middle|\; y_\text{obs} - \dhat(z)\right).
\end{equation*}

Three contributions follow.
\begin{enumerate}[leftmargin=*,itemsep=1pt]
  \item \textbf{Theory} (Sec.~\ref{sec:theory}). Theorem~\ref{thm:main}
  bounds achievable bias reduction by $C \cdot \MI$ with an explicit
  non-vacuous constant ($C{=}0.173$ for the Gaussian-linear verifier
  parameters). Theorem~\ref{thm:logconcave} extends the same constant
  $C{=}2\tau^2$ to all sub-Gaussian $\delta^*$ via a Donsker-Varadhan
  argument, sharp at ratio $0.995$ on Rademacher constructions. Theorem~\ref{thm:identif}
  gives identifiability conditions connecting \method{} to classical
  instrumental-variable analysis. Theorem~\ref{thm:rope_conv} shows
  that \method{} strictly generalizes RoPE. When $z$ is restricted
  to a calibration index, \method{} recovers RoPE, and for abstract
  $z$ it surpasses RoPE in expressiveness, a claim we verify
  empirically.
  \item \textbf{Architecture} (Sec.~\ref{sec:method}). The corrector
  is backbone-agnostic, composing with any conditional density
  estimator (validated on MAF flow and DDPM diffusion). A
  \emph{three-way decomposition} $\dhat(z) = \E[\dhat(z)] + (\dhat(z)
  - \E[\dhat(z)])$ separates a constant Marginal correction (matched
  by RoPE) from a regime-conditional residual unique to \method{}.
  \item \textbf{Empirics} (Sec.~\ref{sec:experiments}). On the
  Simple-Likelihood-Complex-Posterior (SLCP) hide-the-calibration
  benchmark, \method{} with text $z$ is statistically equivalent to
  the oracle neural posterior on the flow-NPE backbone (TOST
  $p<0.0001$, 10 seeds), and the same result holds under continuous
  $z \in [0,1]$ (Appendix~\ref{app:continuous_z}), confirming that
  the corrector generalises beyond categorical regimes. A faithful
  re-implementation of RoPE's (Algorithm~1) collapses at small $N_o$
  (Table~\ref{tab:hidecal_slcp_10seed_flow}). On a drift-diffusion
  model with task-instruction side-channels, \method{} recovers
  $101\%$ of the NPE-to-Oracle gap, compared with RoPE's $80\%$
  given full $(\theta^*, y^*)$ calibration. On SIR weekend-delay,
  RoPE dominates because the structural permutation
  misspecification is fully recoverable from calibration pairs,
  making \method{} and RoPE complementary rather than competing.
\end{enumerate}

\begin{figure}[t]
  \centering
  \includegraphics[width=0.62\linewidth]{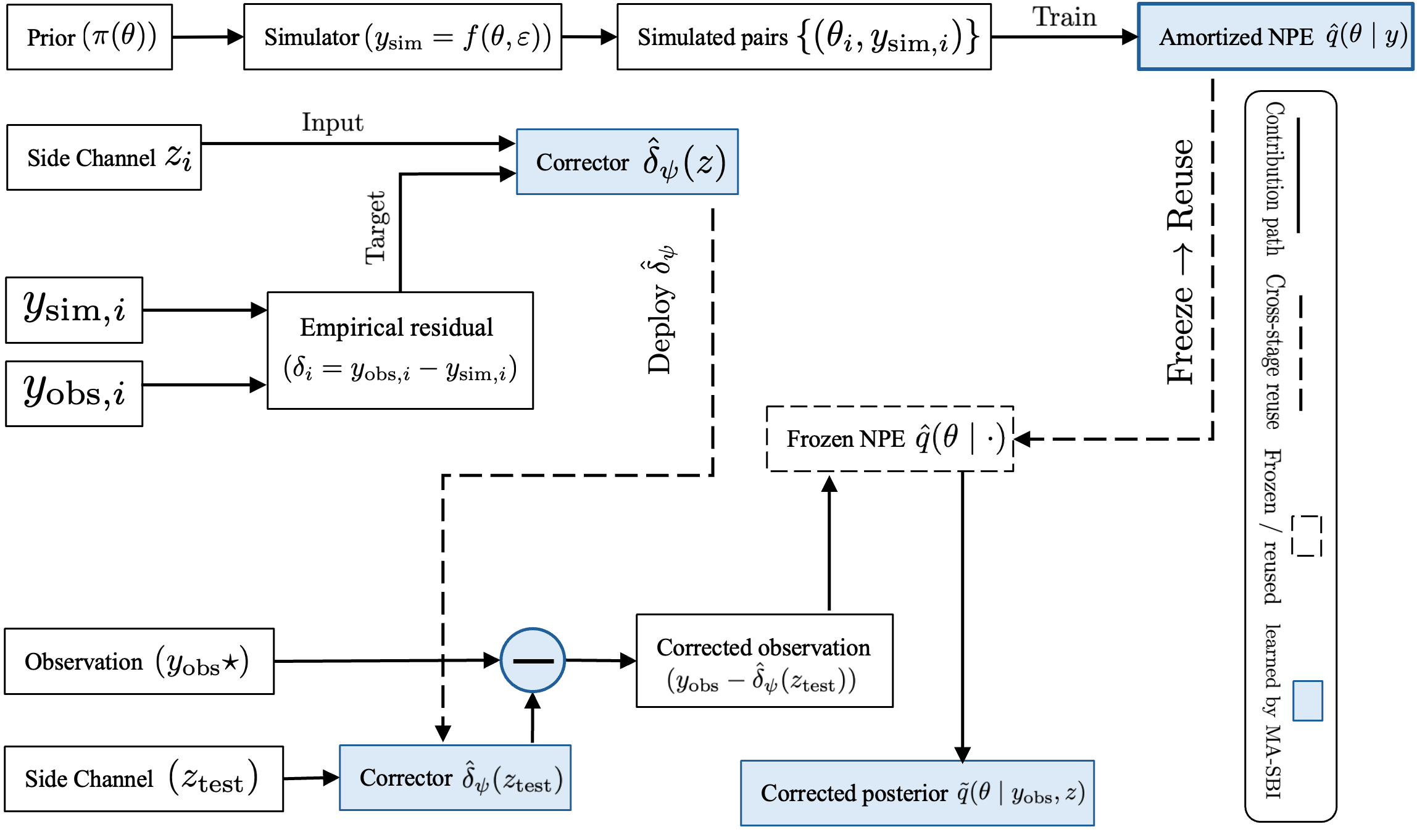}
  \caption{\method{} pipeline. Training: NPE is fit on
  well-specified $(\theta, y_{\text{sim}})$ pairs. Inference: the
  learned $\dhat(z)$ shifts the observation into the simulator's
  support; the same pre-trained NPE evaluates
  $y_{\text{obs}}-\dhat(z)$. No $(\theta^*, y^*)$ pairs required.}
  \label{fig:schematic}
\end{figure}

\section{Related Work}\label{sec:related}

\subsection{Simulation-based inference}

Classical likelihood-free ABC \cite{beaumont2002abc,sisson2007sequential}
has been superseded by neural density estimation: NPE
\cite{papamakarios2016npe}, SNPE-C \cite{greenberg2019npec}, sequential
NPE \cite{lueckmann2017flexible,hermans2020likelihoodfree}, and
score-based / diffusion SBI \cite{geffner2023compositional,sharrock2024sequential}.
The amortized posterior is treated as a black box, and \method{}
composes with any such estimator.

\subsection{Robustness to misspecification}

\textbf{RoPE} \cite{wehenkel2025rope} is our primary baseline: it
formalises the misspecification gap as an optimal transport problem
between learned summary representations of real and simulated
observations, using calibration pairs $(\theta^*, y^*)$.
Theorem~\ref{thm:rope_conv} formalises the relationship between RoPE
and \method.

FRISBI \cite{senouf2025frisbi} extends RoPE to inductive
amortised inference via mini-batch OT; FMCPE
\cite{ruhlmann2025fmcpe} transports NPE posteriors toward the
real-data posterior via flow matching; both still require
calibration pairs. PriorGuide \cite{yang2026priorguide}
adapts the prior at test time via diffusion guidance, which costs
${\sim}10^2$ score-model evaluations per sample; \method{} amortises
the correction into a single forward pass of $\dhat(z)$.
NNPE \cite{ward2022nnpe} adds spike-and-slab noise to
simulator outputs; robust-statistics SBI \cite{huang2023robust}
learns summary statistics that minimise simulated-vs-observed
mismatch.

\paragraph{Positioning.} Every method above requires ground-truth
parameter pairs, an explicit noise specification, or an alternative
prior. \method{} is orthogonal: it corrects misspecification using
unstructured observational side-information.
Theorem~\ref{thm:rope_conv} shows we generalise RoPE under
calibration-indexed $z$; the hide-the-calibration experiment
(Sec.~\ref{sec:exp:hidecal}) shows strict empirical generalisation using
text.

\paragraph{Conformal prediction.}
Weighted or localized conformal prediction
\cite{tibshirani2019conformal,barber2023beyond,romano2020malice}
adjusts coverage under covariate shift but does not correct
posterior \emph{location}: a biased posterior with a calibrated
credible set is still biased. \method{} corrects the location;
conformal procedures can be applied unchanged on top.

\subsection{Side-information for inference}

Text signals are increasingly used as direct predictors in scientific
forecasting (\textit{From News to Forecast} \cite{wang2024newstoforecast}
integrates news events into LLM-based time-series forecasting;
\textit{EconAgent} \cite{li2024econagent} simulates macroeconomic
activity from LLM-generated agents). The use of text in \method{}
is structurally different. Here $z$ is not a predictor of $\theta$
or $y$ directly, but a diagnostic indicator of which
simulator-misspecification regime is active. The simulator and
prior do the inference, while $z$ identifies the way in which the
simulator is wrong. The Exclusion Test (Sec.~\ref{sec:exp:exclusion})
operationally enforces this distinction by rejecting any $z$ that
predicts $y_\text{sim}$ directly. Classifier-free guidance in diffusion
\cite{ho2022classifierfree} underlies an architectural variant in
Sec.~\ref{subsec:diffusion}. Our identifiability framing
(Theorem~\ref{thm:identif}) connects to instrumental-variable analysis
\cite{angrist1996iv}.

\section{Method}\label{sec:method}

\subsection{Setup}

A simulator $f(\theta, \varepsilon) \to y$ is possibly misspecified,
with real observations $y_\text{obs}$ drawn from a process different
from $f$. Simulated pairs $(\theta, y_\text{sim})$ are available for
training, together with a side-channel $z \in \mathcal{Z}$
characterising the regime that generated $y_\text{obs}$. The goal
is to recover a posterior over $\theta$ that corrects for
misspecification using only $z$, without parameter ground-truth.

Given a pre-trained amortized posterior $\hat{q}(\theta \mid y)$
obtained on $(\theta, y_\text{sim})$ pairs by standard NPE,
\method{} learns a corrector $\dhat : \mathcal{Z} \to \R^{\dim y}$
and defines
\begin{equation}\label{eq:method_correction}
  \tilde{q}(\theta \mid y_\text{obs}, z)
  \;:=\;
  \hat{q}\!\left(\theta \;\middle|\; y_\text{obs} - \dhat(z)\right).
\end{equation}
The corrector shifts the observation into the simulator's support
before the amortized posterior is evaluated, and no retraining of
$\hat{q}$ is required.

\subsection{Training the corrector}\label{subsec:correction}

\paragraph{Data requirement.}
\method{} requires no $\theta^*$ labels. Instead, each calibration
point $i$ supplies an observation $y_{\text{obs},i}$, an aligned
$z_i$, and the simulator output $y_{\text{sim},i}$ \emph{evaluated
at the same inputs that generated $y_{\text{obs},i}$}. The asymmetry
from RoPE is that alignment is required in observation space rather
than at the level of ground-truth parameter values. Within our
benchmarks this alignment holds by construction, since the synthetic
misspecification process applies a deterministic regime shift on
top of $y_\text{sim}$, leaving $y_\text{obs}$ and $y_\text{sim}$
paired sample-wise.

Training uses observable calibration triples
$(z_i, y_{\text{sim},i}, y_{\text{obs},i})$, with the empirical
misspecification $\delta_i := y_{\text{obs},i} - y_{\text{sim},i}$
observed directly. The corrector minimises
\begin{equation}\label{eq:corrector_loss}
  \hat{\psi} \;=\; \arg\min_\psi \;
    \frac{1}{N} \sum_{i=1}^N \left\| \delta_i - \dhat(z_i) \right\|_2^2.
\end{equation}
When sample-wise alignment is unavailable,
Eq.~\eqref{eq:corrector_loss} is replaced by a per-regime
distribution-matching loss (Appendix~\ref{app:dist_matching}).
Unlike RoPE or FMCPE, this requires no $\theta^*$ pairs, only
$y_\text{obs}$ paired with $z$ and a simulator run at matched
inputs.

\paragraph{The triples requirement vs.\ the parameter-pair requirement.}
$(y_\text{obs}, z)$ co-occurs naturally whenever a human-readable
interpretation of the observation is recorded as part of protocol.
Reaction-time distributions are paired with experimental instructions
in DDM modelling. Case counts are paired with CDC and WHO bulletins
in epidemiological nowcasting. Regional damage is paired with IPCC
text in climate-impact assessment. In all three settings, $\theta^*$
is the inferential target and is not directly observable, which is
the reason SBI is used in the first place. \method{} converts the
abundant side-information channel into a posterior correction.

\subsection{Stochastic-bootstrap variant for real-world data}\label{subsec:stochastic}

The point corrector assumes the synthetic triples match
$p(y \mid z)$, which fails on real data when within-regime variation
dominates the between-regime shift. The stochastic variant draws
$\delta^{(k)}$ from a per-regime empirical residual pool
$\hat{p}_{\text{LOO}}(\delta \mid z_\star) =
\mathrm{Uniform}\{\delta^{\text{boot}}_j : z_j = z_\star, j \neq
\star\}$, where $\delta^{\text{boot}}_i = y_{\text{obs},i} -
\E_{\theta\sim\hat{q}(\cdot\mid y_{\text{obs},i})}[\bar f(\theta)]$,
and averages corrected posteriors,
\begin{equation}\label{eq:stochastic}
  \tilde{q}(\theta \mid y, z) = \frac{1}{K}\sum_{k=1}^{K}
    \hat{q}(\theta \mid y - \delta^{(k)}),
  \quad \delta^{(k)} \sim \hat{p}_{\text{LOO}}(\delta \mid z).
\end{equation}
No synthetic generator is needed. The variant is preferred when the
per-cell residual norm exceeds the per-regime median by ${>}3\times$
(a pre-flight signal); otherwise the point corrector suffices. On
synthetic benchmarks the LOO pool collapses to a single point and
Eq.~\eqref{eq:stochastic} reduces to the point corrector.

\subsection{Three-way decomposition}\label{subsec:threeway}

The corrector admits the decomposition
$\dhat(z) = \underbrace{\E_z[\dhat(z)]}_{\text{Marginal}}
+ \underbrace{\dhat(z) - \E_z[\dhat(z)]}_{\text{Conditional}}$,
separating the constant location shift (which RoPE-style methods
also recover from $(\theta^*, y^*)$ pairs) from the
regime-conditional component unique to \method{}. On Two Moons the
Marginal alone is harmful ($-13\%$) while the Conditional carries
the total to $+95\%$ (Sec.~\ref{sec:exp:threeway}); the decomposition
is the ablation isolating where \method{} adds value.

\subsection{Backbones}\label{subsec:diffusion}

The framework is backbone-agnostic.

\textbf{Normalising flow backbone.} Standard \texttt{sbi}
\cite{tejerocantero2020sbi} NPE with masked autoregressive flow
(MAF). $\dhat$ applied as input correction before evaluation.

\textbf{Diffusion backbone.} Variance-preserving DDPM
\cite{ho2020ddpm} $\varepsilon_\phi(\theta_t, t, y)$ trained on
$(\theta, y_\text{sim})$ pairs by denoising score matching:
$\mathcal{L}(\phi) = \E_{t, \theta_0, \varepsilon}
\|\varepsilon - \varepsilon_\phi(\theta_t, t, y_\text{sim})\|^2$,
with $\theta_t = \sqrt{\bar\alpha_t}\theta_0 +
\sqrt{1-\bar\alpha_t}\varepsilon$ and a linear $\beta$-schedule over
$T{=}200$ steps. At inference we condition on
$y_\text{obs} - \dhat(z)$. A classifier-free-guidance variant is
possible but the input-correction formulation is simpler and composes
with any conditional density estimator; Sec.~\ref{sec:exp:exclusion}
confirms that the two backbones produce within-noise-equivalent gap
closure.

\subsection{Implementation details}

The default $z$-encoder is hashed character $n$-grams ($64$-d,
deterministic, offline); sentence-transformers
\texttt{all-MiniLM-L6-v2} is a drop-in replacement and lands within
${\sim}2$ C2ST points of the hash. The corrector $\dhat$ is a 3-layer
MLP, GELU, $128$--$256$ hidden units. All experiments run on a
single CPU (no GPU). Full hyperparameters in
Appendix~\ref{app:experiments}; pseudocode in
Algorithm~\ref{alg:ma_sbi}.

\section{Theory}\label{sec:theory}

\subsection{Bias-reduction bound (Gaussian-linear regime)}

In the Gaussian-linear setting, canonical-correlation analysis gives
a tight characterisation of achievable bias reduction; the bound is
empirically predictive on real benchmarks (Sec.~\ref{sec:exp:hidecal}).
Theorem~\ref{thm:logconcave} extends the same tight constant
$C = 2\tau^2$ to the log-concave sub-Gaussian family via Talagrand's
T2 inequality; a distribution-free fallback for bounded $\delta^*$
is in Appendix~\ref{app:thm_bounded}.

\begin{theorem}[Bias reduction bound, Gaussian-linear]\label{thm:main}
Suppose the simulator is Gaussian-linear:
$\theta \sim \mathcal{N}(0, \sigma_p^2 I_d)$, $y = \theta + \varepsilon$,
$\varepsilon \sim \mathcal{N}(0, \sigma_s^2 I_d)$,
$y_\text{obs} = y + \delta$, $\delta \sim \mathcal{N}(0,
\sigma_\delta^2 I_d)$, with $z$ jointly Gaussian with $\delta$ and
$\dim(z) \geq d$ (so the canonical correlations
$\rho_1,\dots,\rho_d$ are not capped by $\dim(z)$).
Let $\alpha = \sigma_p^2 / (\sigma_p^2 + \sigma_s^2)$. The achievable
bias reduction over vanilla NPE — taken as the supremum over
measurable $z$-indexed correctors $\dhat$ — satisfies:
\begin{equation}\label{eq:thm1_main}
  \sup_{\dhat}\!\big[\Delta\!\operatorname{MSE}_\text{unc} -
  \Delta\!\operatorname{MSE}_\text{cor}\big]
  \;\leq\; \alpha^2 d \sigma_\delta^2 \cdot
    \left[1 - \exp\!\left(-\tfrac{2\MI}{d}\right)\right]
  \;\leq\; 2 \alpha^2 \sigma_\delta^2 \cdot \MI.
\end{equation}
\end{theorem}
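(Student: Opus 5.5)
We will pass to the idealised NPE and reduce everything to a Gaussian conditional-variance computation. In the Gaussian-linear model the exact (oracle-trained) NPE has posterior mean $\E[\theta\mid y]=\alpha y$. We interpret $\Delta\operatorname{MSE}$ as the excess MSE of the posterior-mean estimator over the well-specified value. The uncorrected estimator is $\alpha(y+\delta)$, and its error is $\alpha y-\theta+\alpha\delta$. Since $\delta$ is independent of $(\theta,\varepsilon)$, the cross term vanishes, so
\[
\Delta\operatorname{MSE}_\text{unc}=\alpha^2\,\E\|\delta\|^2=\alpha^2 d\sigma_\delta^2 .
\]
The corrected estimator $\alpha(y+\delta-\dhat(z))$ gives $\Delta\operatorname{MSE}_\text{cor}=\alpha^2\E\|\delta-\dhat(z)\|^2$. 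This also uses independence of $z$ from $(\theta,\varepsilon)$, which we assume as part of "$z$ jointly Gaussian with $\delta$" only.

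The supremum over measurable $\dhat$ is attained by the regression function $\dhat(z)=\E[\delta\mid z]$. Hence
\[
\sup_{\dhat}\big[\Delta\operatorname{MSE}_\text{unc}-\Delta\operatorname{MSE}_\text{cor}\big]
=\alpha^2\big(d\sigma_\delta^2-\E\operatorname{tr}\Var(\delta\mid z)\big)
=\alpha^2\operatorname{tr}\Var\!\big(\E[\delta\mid z]\big).
\]
This quantity is in fact an equality, which makes the first inequality tight.

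Next we diagonalise via CCA. Whiten $\delta$ by $\sigma_\delta$ and $z$ by $\Sigma_z^{-1/2}$, then take the canonical bases. We obtain coordinates $u_i=\langle a_i,\delta\rangle/\sigma_\delta$ with $a_i$ orthonormal (possible because $\Cov(\delta)$ is isotropic), and $v_i$ pairwise uncorrelated with $\Cov(u_i,v_j)=\rho_i\mathbb 1_{i=j}$. Because $\dim z\ge d$, all $d$ correlations are defined. Joint Gaussianity then gives two facts:
\[
\E\|\E[\delta\mid z]\|^2=\sigma_\delta^2\sum_{i=1}^d\rho_i^2,
\qquad
\MI=-\tfrac12\sum_{i=1}^d\log(1-\rho_i^2).
\]
The second is the standard Gaussian mutual-information formula, and it splits into independent canonical pairs.

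It remains to maximise $\sum_i\rho_i^2$ subject to $-\tfrac12\sum_i\log(1-\rho_i^2)=I$. Set $x_i=-\log(1-\rho_i^2)\ge0$, so that $\sum_i x_i=2I$, and maximise $\sum_i(1-e^{-x_i})$. Since $x\mapsto 1-e^{-x}$ is concave, Jensen gives
\[
\sum_i(1-e^{-x_i})\le d\,(1-e^{-2I/d}),
\]
with equality when all $\rho_i$ are equal. This yields the middle expression of the claim. The final inequality follows from $1-e^{-t}\le t$ applied with $t=2I/d$, giving $d(1-e^{-2I/d})\le 2I$.

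The main obstacle is the modelling step rather than the algebra. We must fix precisely what $\Delta\operatorname{MSE}$ means (excess posterior-mean MSE under an exact NPE) and justify that $z$ carries no information about $\theta$ or $\varepsilon$, so that the cross terms vanish. If $z$ were correlated with $y$, the supremum could exceed the bound. I would therefore state this independence, which is implicit in "$z$ is a side-channel for $\delta$", explicitly as part of the hypothesis before running the computation.
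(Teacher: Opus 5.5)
Your proof is correct and is essentially the paper's argument: reduce the supremum to $\alpha^2\bigl(d\sigma_\delta^2-\operatorname{MMSE}(\delta\mid z)\bigr)$, write this and $\MI$ in terms of the canonical correlations, apply Jensen, and finish with $1-e^{-u}\le u$. (Your concavity of $1-e^{-x}$ in $x_i=-\log(1-\rho_i^2)$ is the same AM--GM inequality as the paper's concavity of $\log$ applied to the $1-\rho_k^2$.) The one difference is that the paper measures excess MSE as deviation from the well-specified posterior mean, $\E\|\alpha(y_\text{obs}-\dhat(z))-\alpha y\|^2=\alpha^2\E\|\delta-\dhat(z)\|^2$, which has no cross terms with $\theta-\alpha y$; your extra assumption that $z$ is independent of $(\theta,\varepsilon)$ is needed only under your excess-Bayes-risk reading, not under the paper's.
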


The right inequality is the linear form with non-vacuous constant
$C = 2\alpha^2 \sigma_\delta^2$. For verifier parameters
($\sigma_p = 0.5$, $\sigma_s = 0.3$, $\sigma_\delta = 0.4$),
$C = 0.173$.

\begin{corollary}[Graceful degradation]\label{cor:grace}
$\MI \to 0 \Rightarrow$ bias reduction $\to 0$: the side-channel
cannot harm performance when uninformative.
\end{corollary}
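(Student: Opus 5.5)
The statement immediately above is Corollary~\ref{cor:grace}, and I will derive it directly from the two-sided bound in Theorem~\ref{thm:main}. The plan is a sandwich argument on the optimal bias reduction
\[
B(z) := \sup_{\dhat}\big[\Delta\operatorname{MSE}_\text{unc}-\Delta\operatorname{MSE}_\text{cor}\big].
\]
I first record that $B(z)\ge 0$. This holds because the trivial corrector $\dhat\equiv 0$ belongs to the class of measurable $z$-indexed correctors, and for it the corrected posterior coincides with vanilla NPE. The bracket therefore vanishes for this choice, so the supremum is at least zero.

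For the upper side I invoke the right-hand inequality of Eq.~\eqref{eq:thm1_main}, which gives $0\le B(z)\le 2\alpha^2\sigma_\delta^2\,\MI$. The constant $2\alpha^2\sigma_\delta^2$ is finite and does not depend on $z$. Hence, for any sequence of side-channels $z_n$ (jointly Gaussian with $\delta$, with $\dim z_n\ge d$) such that $I(\delta^*;z_n)\to 0$, squeezing gives $B(z_n)\to 0$. Equivalently, I could use the sharper middle expression $\alpha^2 d\sigma_\delta^2[1-\exp(-2I/d)]$, which is continuous in $I$ and equals zero at $I=0$.

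The ``cannot harm'' clause needs a separate remark, since the theorem only bounds a supremum. In the exact limit $I=0$, the jointly Gaussian pair $(\delta,z)$ is independent. The MSE-optimal $z$-measurable correction is then $\E[\delta\mid z]=\E[\delta]=0$, so the optimal corrector reduces to the identity and the corrected MSE equals the uncorrected one. I would also note that the trained corrector of Eq.~\eqref{eq:corrector_loss} targets this conditional mean, so in population it inherits the same no-harm property.

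The main obstacle is interpretive rather than technical. ``Cannot harm'' is a statement about the optimal or population corrector, not about an arbitrary finite-sample $\dhat$. I will state the corollary in this population sense and flag that finite-sample estimation error lies outside Theorem~\ref{thm:main}.
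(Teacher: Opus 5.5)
Your proposal is correct and follows the same route as the paper. The paper gives no separate proof of Corollary~\ref{cor:grace}; it reads the corollary off the right-hand inequality of Eq.~\eqref{eq:thm1_main} by letting $\MI \to 0$. Your extra step, that the zero corrector makes the supremum nonnegative, supplies the lower half of the squeeze. That step is what justifies the ``cannot harm'' clause in the population sense, which the paper leaves implicit; the finite-sample dip to $-36\%$ in Sec.~\ref{sec:exp:exclusion} is consistent with your caveat.
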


\begin{corollary}[Oracle recovery]\label{cor:oracle}
As $\MI \to \infty$, the bias reduction approaches $\alpha^2 d
\sigma_\delta^2$: the side-channel recovers the full well-specified
MSE.
\end{corollary}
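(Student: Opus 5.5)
In the Gaussian-linear model the amortized posterior is available in closed form, so the plan is to reduce everything to the conditional variance of $\delta$ given $z$. The first step is to identify $\hat{q}$ with the exact posterior under the simulator. Its posterior mean is $\E[\theta\mid y]=\alpha y$ with $\alpha=\sigma_p^2/(\sigma_p^2+\sigma_s^2)$. Feeding $y_\text{obs}-\dhat(z)=y+\delta-\dhat(z)$ therefore gives a posterior mean $\alpha y+\alpha(\delta-\dhat(z))$. The error relative to the well-specified posterior mean is then $\alpha(\delta-\dhat(z))$, with $\dhat\equiv 0$ in the uncorrected case. Because $\delta$ is independent of $(\theta,\varepsilon)$, the cross terms in the mean squared error vanish. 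Hence
\[
\Delta\operatorname{MSE}_\text{unc}=\alpha^2\E\|\delta\|^2=\alpha^2 d\sigma_\delta^2,\qquad
\Delta\operatorname{MSE}_\text{cor}=\alpha^2\E\|\delta-\dhat(z)\|^2 .
\]
The difference is $\alpha^2\bigl(d\sigma_\delta^2-\E\|\delta-\dhat(z)\|^2\bigr)$. I would make explicit that this "excess MSE" interpretation is the one intended, and that $z$ is taken independent of $(\theta,\varepsilon)$ given $\delta$, which is the exclusion restriction.

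Second, the supremum over measurable $\dhat$ is attained by $\dhat(z)=\E[\delta\mid z]$, the $L^2$ projection. The achievable reduction is therefore $\alpha^2\bigl(d\sigma_\delta^2-\operatorname{tr}\Sigma_{\delta\mid z}\bigr)$. By joint Gaussianity, $\Sigma_{\delta\mid z}$ is a constant matrix. Canonical correlation analysis gives $\Sigma_{\delta\mid z}$ eigenvalues $\sigma_\delta^2(1-\rho_i^2)$ after whitening, since $\Sigma_\delta=\sigma_\delta^2 I$ is isotropic and whitening is a scaling. Consequently
\[
\sup\text{reduction}=\alpha^2\sigma_\delta^2\sum_{i=1}^d\rho_i^2 .
\]
The Gaussian mutual information formula gives $\MI=-\tfrac12\sum_i\log(1-\rho_i^2)$. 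The assumption $\dim z\ge d$ only ensures that all $d$ correlations appear.

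Third, I would compare $\sum\rho_i^2$ with $\MI$. Write $u_i=-\tfrac12\log(1-\rho_i^2)\ge 0$, so that $\rho_i^2=1-e^{-2u_i}$ and $\sum_i u_i=\MI$. The function $u\mapsto 1-e^{-2u}$ is concave, so Jensen's inequality gives
\[
\sum_i\bigl(1-e^{-2u_i}\bigr)\le d\bigl(1-e^{-2\MI/d}\bigr).
\]
This is the middle expression in \eqref{eq:thm1_main}, with equality when all $\rho_i$ are equal. The right inequality follows from $1-e^{-x}\le x$ applied with $x=2\MI/d$. The corollaries are then immediate from the middle form: it tends to $0$ as $\MI\to 0$, and it tends to $\alpha^2 d\sigma_\delta^2$ as $\MI\to\infty$.

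The main obstacle is the first step. I need to pin down precisely what $\Delta\operatorname{MSE}$ measures and confirm that the cross term $\E[\langle \alpha y-\theta,\alpha(\delta-\dhat(z))\rangle]$ vanishes. This requires $z$ to carry no information about $\theta$ beyond $\delta$, so I would state that independence explicitly. If MSE is instead measured against $\theta$ itself, the same decomposition holds because $\alpha y-\theta$ is orthogonal to every function of $(\delta,z)$; the Bayes-risk term is then common to both cases and cancels in the difference.
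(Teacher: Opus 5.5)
Your derivation of the theorem follows the paper's Appendix~\ref{app:thm1} essentially step for step: posterior mean $\alpha y$, excess MSE $\alpha^2\E\|\delta-\dhat(z)\|^2$ minimised by $\E[\delta\mid z]$, the CCA formulas, Jensen, and the tangent line. Your handling of the cross term, via independence of $(\delta,z)$ from $(\theta,\varepsilon)$, is more careful than the paper's. The problem is your last sentence, where you read Corollary~\ref{cor:oracle} off the middle expression of \eqref{eq:thm1_main}. That expression is an \emph{upper} bound on the achievable reduction. Its convergence to $\alpha^2 d\sigma_\delta^2$ therefore only gives $\limsup \le \alpha^2 d\sigma_\delta^2$, which is vacuous, since $\rho_i^2\le 1$ already caps the reduction at $\alpha^2 d\sigma_\delta^2$. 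The same move is valid for Corollary~\ref{cor:grace}, because there the reduction is squeezed between $0$ and a bound tending to $0$. The paper makes the identical inference, so the defect lies in the statement itself, not only in your write-up.

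Your own exact formula shows the inference fails. The achievable reduction is $\alpha^2\sigma_\delta^2\sum_i\rho_i^2$, while $\MI\to\infty$ only forces $\max_i\rho_i\to 1$. Take $z_1=\rho\,\delta_1/\sigma_\delta+\sqrt{1-\rho^2}\,\eta_1$ and $z_k=\eta_k$ for $k\ge 2$, with independent standard normal $\eta$. Then $\rho_1=\rho$ and $\rho_2=\dots=\rho_d=0$, so $\MI=-\tfrac12\log(1-\rho^2)\to\infty$ as $\rho\to 1$, yet the reduction tends to $\alpha^2\sigma_\delta^2$, a factor $d$ short of the claim. Unconditionally, you only get $\liminf \ge \alpha^2\sigma_\delta^2$. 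The corollary needs an extra hypothesis: every canonical correlation tends to $1$, equivalently $\operatorname{MMSE}(\delta\mid z)=\sigma_\delta^2\sum_i(1-\rho_i^2)\to 0$. One instance is equal $\rho_i$, where your Jensen step is an equality and the middle expression is the exact reduction; this is also the regime of the ``oracle'' row in Table~\ref{app:tab_thm1_numeric}. Under that hypothesis the proof is one line, $\sum_i\rho_i^2\to d$, and needs neither Jensen nor the mutual-information bound.
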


Both corollaries are confirmed numerically on a $K=4$ discrete-source
sweep (Figure~\ref{fig:fig3}): empirical gap closed tracks $\MI / \log K$
exactly. Full proof and a discussion of non-Gaussian observation
noise are in Appendix~\ref{app:thm1}; the cleaner sub-Gaussian
extension is Theorem~\ref{thm:logconcave}.

\begin{figure}[t]
  \centering
  \includegraphics[width=0.75\linewidth]{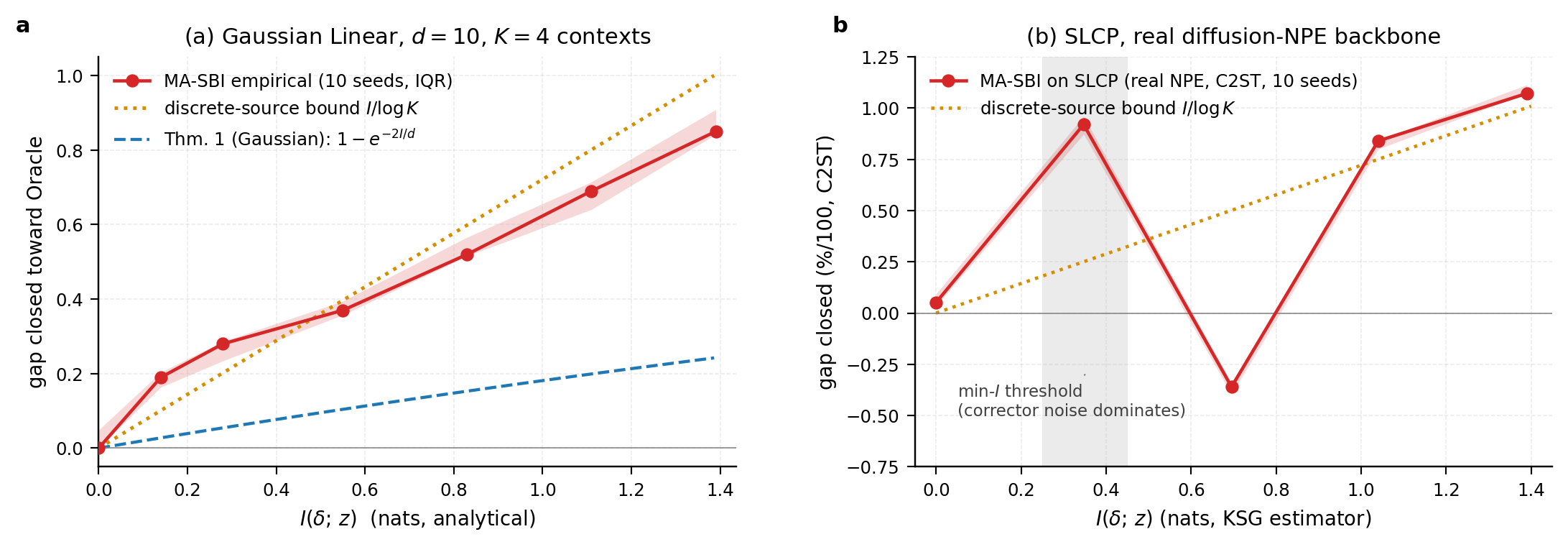}
  \caption{Mutual information against empirical gap closed. Panel
  (a) is the analytic Gaussian-Linear sweep with $K{=}4$ discrete
  contexts, where the empirical gap closed tracks $\MI / \log K$
  almost exactly and recovers the full Oracle as $\MI \to \log K$.
  Panel (b) is the SLCP benchmark with a diffusion-NPE backbone and
  shows a finite-sample minimum-$I$ threshold near $0.5$ nats below
  which corrector estimation noise overwhelms the weak side-channel
  signal, producing the dip at the noisy-$z$ regime.}
  \label{fig:fig3}
\end{figure}

\subsection{Sub-Gaussian extension}\label{subsec:thm1prime}

Theorem~\ref{thm:main}'s I-MMSE relation is sharp only for Gaussian
noise. The constant $C = 2\tau^2$ extends to all sub-Gaussian
$\delta^*$ by Donsker--Varadhan; strong log-concavity admits a
cleaner Bakry--\'Emery $\to$ T2 derivation
(Appendix~\ref{app:thm_logconcave}) but is not necessary.

\begin{theorem}[Sub-Gaussian]\label{thm:logconcave}
Suppose $\delta^*$ is sub-Gaussian with proxy $\tau^2$, i.e.,
$\E\big[e^{u^\top(\delta^*-\E\delta^*)}\big] \leq
e^{\|u\|^2\tau^2/2}$ for all $u\in\R^d$. Let $z$ be any random
variable with $\MI < \infty$. Then
\begin{equation}\label{eq:thm_logconcave}
  \Var(\E[\delta^* \mid z]) \;\leq\; 2\tau^2 \cdot \MI.
\end{equation}
If additionally $\dhat^\star(z) := \E[\delta^*\mid z] - \E[\delta^*]$
and $\mu(y) := \E_{\hat q}[\theta\mid y]$ is $L$-Lipschitz, then
$\E\,\|\mu(y_\text{obs}) - \mu(y_\text{obs} - \dhat^\star(z))\|_2^2
\leq 2 L^2 \tau^2 \cdot \MI$.
\end{theorem}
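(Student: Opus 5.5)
The plan is a pointwise-in-$z$ change-of-measure argument, averaged over $z$ at the end. Write $\mu_\delta := \E[\delta^*]$ and $m(z) := \E[\delta^*\mid z]$, and interpret $\Var(\E[\delta^*\mid z])$ as the trace of the covariance, $\E\|m(z)-\mu_\delta\|_2^2$. The key identity I would use is that mutual information is an averaged Kullback--Leibler divergence:
\begin{equation*}
  \MI = \E_z\!\left[\mathrm{KL}\big(P_{\delta^*\mid z}\,\|\,P_{\delta^*}\big)\right].
\end{equation*}
Because $\MI<\infty$, the conditional divergence $\mathrm{KL}_z := \mathrm{KL}(P_{\delta^*\mid z}\|P_{\delta^*})$ is finite for almost every $z$. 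I would work on that full-measure set and take regular conditional distributions for granted.

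\textbf{Step 1 (Donsker--Varadhan at fixed $z$).} Fix a unit vector $u\in\R^d$ and $\lambda>0$, and set $g(\delta)=\lambda u^\top(\delta-\mu_\delta)$. The Donsker--Varadhan variational inequality
\begin{equation*}
  \E_Q[g]\le \mathrm{KL}(Q\|P)+\log\E_P e^{g}
\end{equation*}
applies with $Q=P_{\delta^*\mid z}$ and $P=P_{\delta^*}$. The sub-Gaussian hypothesis bounds the log-moment-generating term by $\lambda^2\tau^2/2$, and it also guarantees that $e^g$ is $P$-integrable, so the inequality is legitimate. This gives
\begin{equation*}
  \lambda\,u^\top(m(z)-\mu_\delta)\le \mathrm{KL}_z+\lambda^2\tau^2/2 .
\end{equation*}
Dividing by $\lambda$ and optimizing over $\lambda>0$ yields $u^\top(m(z)-\mu_\delta)\le\sqrt{2\tau^2\,\mathrm{KL}_z}$.

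\textbf{Step 2 (choosing the direction).} Since $z$ is fixed, I may choose $u$ depending on $z$. Taking $u=(m(z)-\mu_\delta)/\|m(z)-\mu_\delta\|$ (any $u$ works if the difference vanishes) gives
\begin{equation*}
  \|m(z)-\mu_\delta\|_2^2\le 2\tau^2\,\mathrm{KL}_z .
\end{equation*}
This bound is dimension-free, which is why the constant $2\tau^2$ carries no factor of $d$. Taking $\E_z$ of both sides and applying the identity above gives Eq.~\eqref{eq:thm_logconcave}.

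\textbf{Step 3 (posterior-mean consequence).} With $\dhat^\star(z)=m(z)-\mu_\delta$, the $L$-Lipschitz property of $\mu$ gives, pointwise,
\begin{equation*}
  \|\mu(y_\text{obs})-\mu(y_\text{obs}-\dhat^\star(z))\|_2\le L\|\dhat^\star(z)\|_2 .
\end{equation*}
Squaring, taking expectations, and applying Step 2 gives the bound $2L^2\tau^2\MI$. This step does not need any independence between $y_\text{obs}$ and $z$.

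\textbf{Main obstacle.} I expect the only delicate point to be technical rather than conceptual: justifying the Donsker--Varadhan inequality for the unbounded test function $g$, and handling conditioning measurably. I would first apply the inequality to the truncation $g\wedge M$ and let $M\to\infty$ by monotone convergence, using the sub-Gaussian moment-generating-function bound to control the right-hand side and finiteness of $\mathrm{KL}_z$ to ensure $\E_Q[g]$ is well defined. A further check is that choosing $u$ as a function of $z$ is measurable; it is, since $u$ is an explicit measurable function of $m(z)$.
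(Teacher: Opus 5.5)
Your proposal is correct and follows essentially the same argument as the paper: Donsker--Varadhan at fixed $z$ with the linear test function $t\langle u,\delta^*-\E\delta^*\rangle$, the sub-Gaussian MGF bound, optimisation over $t$ and over the unit direction $u$, then squaring and averaging using $\MI=\E_z[\mathrm{KL}(P_{\delta^*\mid z}\|P_{\delta^*})]$, followed by the pointwise Lipschitz bound. Your extra care about truncating the unbounded test function and about the measurability of the $z$-dependent direction supplies technical details that the paper's short proof leaves implicit.
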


\begin{proof}[Proof]
Donsker--Varadhan with test $f = t\langle u, \delta^*-\E\delta^*\rangle$
plus the sub-Gaussian MGF gives $t\langle u, \E[\delta^*\!\mid\!z] -
\E[\delta^*]\rangle \leq \tfrac{t^2\tau^2}{2} +
D_{\mathrm{KL}}(P_{\delta^*\!\mid\!z}\|P_{\delta^*})$; optimising
over $t$, sup over $\|u\|{=}1$, squaring, and $\E_z$ yields
$\Var(\E[\delta^*\!\mid\!z]) \leq 2\tau^2 \cdot \MI$. The Lipschitz
corollary is pointwise $L$-Lipschitz applied to $\dhat^\star(z)$.
\hfill$\square$
\end{proof}

The constant matches $2\alpha^2\sigma_\delta^2$ under
$\tau^2{=}\sigma_\delta^2$, $L{=}\alpha$, and is essentially sharp:
an empirical sweep across Rademacher, per-coordinate-bimodal,
three-mode, and asymmetric two-mode constructions saturates at
ratio $0.995$. Bounded-noise specialisation in
Appendix~\ref{app:thm_bounded}.

\subsection{Identifiability and IV analogy}

\begin{theorem}[Identifiability]\label{thm:identif}
Under standard regularity, the corrected posterior $\tilde{q}(\theta
\mid y, z)$ is identifiable iff the joint map $(\theta, z) \mapsto
p(y \mid \theta, z)$ is injective on $\Theta \times
\operatorname{supp}(z)$.
\end{theorem}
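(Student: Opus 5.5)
The plan is to make ``identifiable'' precise first, then prove each direction separately. I will say the corrected posterior is identifiable if the map $(\theta, z) \mapsto \tilde{q}(\theta \mid y, z)$ is determined by the observable law of $(y, z)$. Equivalently, two distinct parameter--regime configurations $(\theta, z) \neq (\theta', z')$ must never induce the same observational distribution. Under ``standard regularity'' I will assume three things:
\begin{itemize}[leftmargin=*,itemsep=1pt]
  \item the prior $p(\theta)$ has full support on $\Theta$;
  \item the side-channel density is positive on $\operatorname{supp}(z)$;
  \item the likelihood $p(y \mid \theta, z)$ is measurable and continuous in $(\theta, z)$, so that ``equal as densities'' means equal almost everywhere.
\end{itemize}
With these conventions the theorem becomes a statement about a statistical model indexed by $(\theta, z)$. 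Its identifiability is, by the classical definition (Rothenberg/Koopmans), exactly injectivity of the parameter-to-distribution map. This mirrors the IV setting of \cite{angrist1996iv}, where $z$ is the instrument.

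\emph{Sufficiency.} Suppose $(\theta, z) \mapsto p(y \mid \theta, z)$ is injective. By Bayes, $p(\theta \mid y, z) \propto p(y \mid \theta, z)\, p(\theta)$, since $\theta$ and $z$ are independent a priori; the Exclusion Test of Sec.~\ref{sec:exp:exclusion} is exactly the operational check of this independence. Fix an observed $z$. Injectivity at that $z$ means the likelihood function $\theta \mapsto p(\cdot \mid \theta, z)$ separates parameters, so the population posterior is uniquely determined by the joint law of $(y, z)$. A population-optimal NPE trained on the simulator then equals $p(\theta \mid y)$ on the simulator's support, and the optimal corrector is $\delta(z)$, the minimiser of Eq.~\eqref{eq:corrector_loss}. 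Injectivity across different $z$ ensures this corrector is uniquely pinned down on $\operatorname{supp}(z)$: two regimes with distinct shifts produce distinct $p(y \mid \theta, z)$. Hence $\tilde{q}(\theta \mid y, z) = \hat{q}(\theta \mid y - \dhat(z))$ coincides with the true posterior and is unique.

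\emph{Necessity.} This is the contrapositive. Suppose there are $(\theta_1, z_1) \neq (\theta_2, z_2)$ with $p(\cdot \mid \theta_1, z_1) = p(\cdot \mid \theta_2, z_2)$. I will build two data-generating structures that agree on the observable law of $(y, z)$ but yield different corrected posteriors. The first is the original one. The second is obtained by reassigning prior mass between the two configurations in a small neighbourhood of them, using continuity and full support to make this a valid perturbation. Two cases arise:
\begin{itemize}[leftmargin=*,itemsep=1pt]
  \item If $z_1 = z_2$, the likelihood is flat between $\theta_1$ and $\theta_2$. Posterior mass can then be moved between them without changing $p(y \mid z)$, so $\tilde{q}$ is not unique.
  \item If $z_1 \neq z_2$, the decomposition $y = y_{\text{sim}} + \delta(z)$ is ambiguous: a parameter shift can be absorbed into the corrector. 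Two correctors then both minimise the observable loss but shift the posterior differently.
\end{itemize}
In either case identifiability fails.

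The main obstacle is this necessity step. Pointwise non-injectivity at isolated points has prior measure zero, so it need not change the posterior as a density. I will handle this by using the regularity assumption, specifically continuity of the likelihood in $(\theta, z)$ together with full support. Continuity propagates an equality at one pair to approximate equality on neighbourhoods of positive measure. I will then argue via a limiting family of perturbed priors, or alternatively interpret injectivity ``up to null sets'', which is what I expect the standard-regularity clause is meant to absorb.
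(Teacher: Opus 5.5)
Your outline has the same skeleton as the paper's proof: Bayes' rule for sufficiency, and a contrapositive split into $z_1=z_2$ versus $z_1\neq z_2$ for necessity. It has genuine gaps, though, and the decisive one is that the $z_1\neq z_2$ branch cannot be completed, because necessity is false there.

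Take the paper's own benchmark construction: a deterministic regime shift on a Gaussian-linear simulator, $y_\text{obs}=\theta+\varepsilon+\delta(z)$, with $\theta\sim\mathcal{N}(0,\sigma_p^2 I_d)$ independent of $z$. The likelihood depends on $(\theta,z)$ only through $\theta+\delta(z)$. Hence for any $z_1\neq z_2$ in $\operatorname{supp}(z)$, the pairs $(\theta,z_1)$ and $(\theta+\delta(z_1)-\delta(z_2),z_2)$ give the same $p(y\mid\theta,z)$. So the joint map is never injective once $z$ takes two values; if two templates share a shift, you even get $\theta_1=\theta_2$. Yet $p(\theta\mid y,z)\propto\pi(\theta)\,p_\text{sim}(y-\delta(z)\mid\theta)$, which is exactly $\hat q(\theta\mid y-\delta(z))$ for a population-optimal $\hat q$, and it is unique.

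Your mechanism for a second, observationally equivalent structure does not exist. You propose that a parameter shift is absorbed into the corrector, with two correctors both minimising the observable loss. But in Eq.~\eqref{eq:corrector_loss} each $\delta_i=y_{\text{obs},i}-y_{\text{sim},i}$ is observed, so the population minimiser $\E[\delta\mid z]$ is unique on $\operatorname{supp}(z)$ whatever the likelihood does. The shift is absorbed by the likelihood, not the corrector, and conditioning on the observed $z$ undoes it. The paper's Case~B (``swapping leaves the joint distribution unchanged'') breaks on the same example, so following it will not rescue you. Since $z$ is conditioned on rather than inferred, only injectivity of $\theta\mapsto p(\cdot\mid\theta,z)$ for (almost) every fixed $z$, i.e.\ your Case~A, can matter.

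Sufficiency has the opposite defect: injectivity does no work. If prior and likelihood are known, Bayes' rule fixes the posterior for any likelihood, injective or not; this is also all the paper's sufficiency step uses. If instead the prior must be recovered from the law of $(y,z)$, injectivity of the kernel is not enough. For example, $y\mid\theta\sim\mathrm{Bin}(n,\theta)$ is injective in $\theta$, but the law of $y$ sees the prior only through its first $n$ moments. Two full-support priors with equal moments then give the same observable law and different posteriors.

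This is in tension with your Case~A. There, moving prior mass only yields an observationally equivalent structure under exactly this unknown-prior reading. So no single reading of ``identifiable'' makes both of your directions go through, except Rothenberg's. Under that reading the theorem is a tautology about $(\theta,z)\mapsto p(\cdot\mid\theta,z)$ and says nothing about $\tilde q$.

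Three further steps need repair:
\begin{itemize}
\item Your claim that $\hat q(\theta\mid y-\dhat(z))$ equals the true posterior requires an unstated assumption: $\delta$ must be a deterministic, $\theta$-independent function of $z$. In general the minimiser is $\E[\delta\mid z]$. When $\delta$ keeps spread given $z$ (as in Theorem~\ref{thm:main}, where $\operatorname{MMSE}(\delta\mid z)>0$), the exact posterior uses the convolved likelihood $\int p_\text{sim}(y-\delta\mid\theta)\,p(\delta\mid z)\,d\delta$, not a shifted one.
\item ``Injectivity across $z$ pins down the corrector'' has the implication backwards; the corrector is pinned down by the regression on observed $\delta_i$.
\item Your continuity patch for the null-set problem, which you rightly flagged, does not work. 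Equality of $p(\cdot\mid\theta_1,z)$ and $p(\cdot\mid\theta_2,z)$ at one pair, plus continuity, gives only approximate equality nearby. It never produces a positive-mass set of observationally equivalent parameters. Retreating to ``injective up to null sets'' changes the statement rather than proving it.
\end{itemize}
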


The conditions translate to the IV assumptions
\cite{angrist1996iv}: exogeneity ($z$ independent of
unmodelled confounders), exclusion ($z$ affects $y$ only
through $\delta$), relevance ($\MI > 0$). Our Exclusion Test
(Sec.~\ref{sec:exp:exclusion}) empirically verifies exogeneity and
exclusion; Theorem~\ref{thm:main} formalises relevance. A
counterexample where $z = g(\theta)$ shadows $\theta$ is ruled out
by the Exclusion Test. Full proof in Appendix~\ref{app:thm2}.

\subsection{\method{} strictly generalises RoPE}

\begin{theorem}[\method{} generalises RoPE]\label{thm:rope_conv}
When $z$ is restricted to calibration-indexed form
$z_i = \mathbf{e}_i \in \{1, \dots, N\}$, \method{}'s learned
corrector converges to RoPE's OT-based correction as $N \to \infty$.
For unrestricted $z$ (e.g., text), \method{} extends RoPE's
expressiveness to regimes RoPE cannot represent.
\end{theorem}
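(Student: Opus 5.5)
The proof has two parts, matching the two claims of Theorem~\ref{thm:rope_conv}: a consistency argument for calibration-indexed $z$, and a separation construction for unrestricted $z$.

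\textbf{Part 1: calibration-indexed $z$.} With one-hot $z_i=\mathbf{e}_i$, the corrector class $\{\dhat\}$ is effectively a lookup table. The empirical objective in Eq.~\eqref{eq:corrector_loss} then decouples across indices, and, assuming the MLP is rich enough to interpolate $N$ distinct one-hot inputs, its minimiser satisfies $\dhat(\mathbf{e}_i)=\delta_i=y_{\text{obs},i}-y_{\text{sim},i}$ exactly. The corrected observation is therefore $y_{\text{obs},i}-\dhat(\mathbf{e}_i)=y_{\text{sim},i}$, which is the simulator output at the matched inputs.

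Next I recall RoPE's construction. It solves an entropic OT problem between learned summaries of real observations $\{h(y_{\text{obs},j})\}$ and simulated ones $\{h(y_{\text{sim},k})\}$, with cost informed by the calibration pairs. The posterior for $y_{\text{obs},i}$ is then a mixture $\sum_k P_{ik}\,\hat q(\theta\mid y_{\text{sim},k})$ weighted by the transport plan $P$. To compare the two methods I will show that, as $N\to\infty$ with vanishing entropic regularisation, $P$ concentrates on the pairing $i\mapsto i$. The argument uses three ingredients:
\begin{itemize}
  \item a consistency result for empirical OT plans (stability of optimal couplings under weak convergence of the marginals);
  \item the assumption that the calibration-trained summary $h$ converges to a map under which $y_{\text{obs},i}$ and $y_{\text{sim},i}$ share their parameter-relevant information;
  \item cyclical monotonicity of the optimal plan, which makes the matched coupling the unique limiting minimiser when $h(y_{\text{obs}})=h(y_{\text{sim}})+\text{noise}$ is injective in the sample-wise pairing.
\end{itemize}
Under these conditions both methods yield $\hat q(\theta\mid y_{\text{sim},i})$ in the limit. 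I then pass to convergence in total variation of the posteriors, using continuity of $\hat q$ in its conditioning argument.

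\textbf{Part 2: strict expressiveness.} RoPE's correction at a new test point depends only on $y_{\text{obs}}$, through its summary and the OT plan. Its corrected posterior is therefore a map $y_{\text{obs}}\mapsto\tilde q$ that cannot depend on $z$. I will construct two regimes $z\neq z'$ with shifts $\delta_z\neq\delta_{z'}$ such that $y_{\text{obs}}$ has the same marginal distribution under both. A Gaussian-linear instance suffices: choose $\delta=\pm c$ with a symmetric prior, so that $y_{\text{obs}}$ is distributed identically whichever sign occurs. Any $z$-blind method then incurs the Bayes risk of the mixture. \method{} with $\dhat(z)=\delta_z$ attains the oracle instead, and Theorem~\ref{thm:main} quantifies this gain as positive whenever $\MI>0$. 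This gives strict separation, and Part 1 gives inclusion, since one-hot $z$ is a special case of unrestricted $z$.

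\textbf{Main obstacle.} The hard step is the Part 1 OT convergence. It depends on idealisations about RoPE's learned summary and regulariser schedule, namely that the plan concentrates on the true pairing. I expect to state these as explicit regularity assumptions rather than derive them, and to argue via a $\Gamma$-convergence of the entropic OT objective. If that fails, I fall back to convergence of the barycentric projection of $P$, which suffices for the posterior-mean claim.
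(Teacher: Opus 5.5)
\textbf{Gap in Part~1.} Your Part~1 rests on the claim that RoPE's transport plan concentrates on the sample-wise pairing $i\mapsto i$, and that step fails.

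First, look at RoPE's construction as given in steps (iii)--(v) of Appendix~\ref{app:rope_ns_sweep}. The plan couples test observations to \emph{fresh} prior-predictive simulations $x_s^j$, with $N_s\neq N_o$ and semi-balanced marginals. So there is no true pairing for the plan to concentrate on. The matched $y_{\text{sim},i}$ that \method{} uses never enter RoPE's OT at all.

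Second, suppose you did transport between the matched sets. Cyclical monotonicity then works against you. The exclusion condition makes $\delta$ independent of $y_{\text{sim}}$, so the coupling $(y_{\text{sim}},y_{\text{sim}}+\delta)$ is not induced by a map. By Brenier's theorem it therefore cannot be the unique optimal plan when $y_{\text{sim}}$ has a density. For example, take $y_{\text{sim}}\sim\mathcal{N}(0,1)$ and $\delta=\pm c$ with an independent fair sign. The optimal plan is the monotone rearrangement $F_{\text{obs}}^{-1}\circ F_{\text{sim}}$, whose cost is strictly below the matched cost $c^2$.

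OT stability gives convergence of empirical plans to that population plan, not to $i\mapsto i$. Your barycentric fallback therefore converges to a shift $y_{\text{obs}}-T^{-1}(y_{\text{obs}})$ that depends on $y_{\text{obs}}$ alone. That shift is neither $\delta_i$ nor a per-regime mean. The matched pairing is optimal only when $\delta$ is constant or when the summary $h$ already cancels $\delta$, and either case trivialises the claim.

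There is also a problem on the \method{} side. With each one-hot index used once, $\dhat(\mathbf{e}_i)=\delta_i$ for every $N$. Nothing converges, and the corrector has no value at a new test observation.

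\textbf{How the paper handles this.} The paper treats the one-hot indices as regime labels that repeat. The minimiser of Eq.~\eqref{eq:corrector_loss} is then the per-index mean $|S_k|^{-1}\sum_{i\in S_k}\delta_i$, which tends to $\E[\delta\mid\text{regime}=k]$ by the SLLN. It compares this not with entropic-OT RoPE but with the simplified RoPE-EMD pull-back, which averages calibration $\delta_j$ over the $k{=}5$ nearest calibration points in $y$-space. Assuming, as the paper does, that those neighbours share the test point's regime label, that average has the same per-regime limit. This pairing of per-regime means with the kNN pull-back is what makes the $N\to\infty$ limit meaningful on both sides, and it is the idea your Part~1 lacks.

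\textbf{Part~2.} Your key sentence is false. With centred $\theta$ and $\delta=\pm c$, the law of $y_{\text{obs}}$ given $z$ has mean $\pm c$, so the two regimes do not induce the same law of $y_{\text{obs}}$.

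Separation still holds for a different reason. The regime posterior given $y_{\text{obs}}$ is non-degenerate, so the best $z$-blind estimator has excess risk $\E\|\E[\theta\mid y_{\text{obs}},z]-\E[\theta\mid y_{\text{obs}}]\|^2>0$ over the $z$-aware one. Theorem~\ref{thm:main} cannot supply this positivity: it is an upper bound, it assumes jointly Gaussian $(\delta,z)$, and it measures against uncorrected NPE rather than the best $z$-blind method.

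Once repaired, your Part~2 would be sharper than the paper's, which only remarks informally that text $z$ can index regimes the calibration set never visited.
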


\paragraph{Empirical verification on SLCP.}
With $z_i = \mathbf{e}_i$ (one-hot calibration index), \method{}-CI
tracks RoPE-EMD and converges as $N\to\infty$; \method{}-TEXT (4
semantic templates) beats both at the same $n_\text{cal}$
($+105.2\%$ vs $+18.0\%$ at $n_\text{cal}{=}500$). Text supplies a
transferable inductive bias over regimes that one-hot indexing
cannot; sample efficiency is $N/K$ rather than $1$. Numbers in
Appendix~\ref{app:rope_ns_sweep}.

\section{Experiments}\label{sec:experiments}

\subsection{Setup}

Experiments use the SBI benchmark suite (sbibm)
\cite{lueckmann2021sbibm}, adapted with regime-indexed
misspecification using $K{=}4$ discrete contexts that each apply a
deterministic shift to the simulator output. Text
templates: $4$ ``informative'' sentences describing the regime, plus
$4$ ``uninformative'' filler sentences. Three regimes per benchmark:
well-specified, misspec-inform, and misspec-uninform. Primary metric:
C2ST \cite{lopezpaz2016c2st} between approximate posterior samples
and sbibm reference samples (lower is better; $0.5$ is
indistinguishable). All experiments use the diffusion (DDPM)
backbone unless otherwise noted. A continuous-$z$ extension
confirming the corrector generalises beyond categorical regimes is
in Appendix~\ref{app:continuous_z}. Headline gap-closure results
across all methods and benchmarks are summarised in
Table~\ref{tab:main} (raw C2ST values are tabulated in
Appendix~\ref{app:raw_metrics}); per-benchmark calibration-size
sweeps are in Appendix~\ref{app:ncal_sweeps}
(Table~\ref{tab:hidecal_slcp}).

\begin{table}[h]
\centering
\caption{\textbf{Main results.} Gap closed toward Oracle across all
methods and benchmarks at $n_\text{cal}{=}500$ where applicable, with
\method{} (text $z$ only, no $(\theta^*, y^*)$) achieving
near-oracle recovery on four of five benchmarks and remaining
complementary to RoPE on the structural-permutation SIR benchmark.
Methods marked $\dagger$ require ground-truth parameter pairs. The
reported metric is C2ST against the reference posterior on sbibm
tasks and posterior-mean MSE on the DDM application, while on real
COVID + OxCGRT data (Sec.~\ref{sec:exp:realdata}) the
stochastic-bootstrap variant achieves a PPC NLL of $3.16$ against
$6.93$ for vanilla NPE ($+54\%$, $n{=}284$). RoPE entries use the
simplified observation-space OT variant on the DDPM backbone for
cross-benchmark consistency, with the faithful Algorithm~1
comparison and 10-seed flow-NPE results reported in
Table~\ref{tab:hidecal_slcp_10seed_flow}.}
\label{tab:main}
\scriptsize
\setlength{\tabcolsep}{3pt}
\begin{tabular}{l@{\hskip 4pt}ccccc}
\toprule
 & \multicolumn{3}{c}{sbibm benchmarks (C2ST)} & \multicolumn{2}{c}{Applications} \\
\cmidrule(lr){2-4} \cmidrule(lr){5-6}
Method (uses) & GL ($d{=}5$) & SLCP ($d{=}5$) & Two Moons & SIR delay & DDM ($d{=}3$) \\
\midrule
NPE (baseline)                             & $0\%$     & $0\%$    & $0\%$     & $0\%$    & $0\%$    \\
NNPE$^\dagger$ (noise-aug)                 & $-2\%^a$  & $20\%$   & $-21\%$   & $+97\%$  & $99\%$   \\
NPE$+z$-concat$^\dagger$ ($z$ as NPE input)& $-25\%$   & $-425\%$ & $-28\%$   & $+100\%^b$ & $24\%$ \\
RoPE-EMD$^\dagger$                         & $9\%$     & $28\%$   & $-2\%$    & $99.8\%$ & $80\%$   \\
RoPE-Sinkhorn$^\dagger$ (tuned $\epsilon$) & $+18\%$   & $22\%$   & $+4\%$    & $+98\%$  & $+2\%$   \\
$g(z)\to\theta^\dagger$ (direct)           & $-776\%$  & $-448\%$ & $-36\%$   & ---$^c$  & $-824\%$ \\
$g(z)\to y^\dagger$ (direct)               & $-462\%$  & $-374\%$ & $-37\%$   & ---$^c$  & $-1025\%$\\
\textbf{\method{}} ($z$ only, no $\theta^*$) & $\mathbf{91\%}$ & $\mathbf{93\%}$ & $\mathbf{95\%}$ & $87\%$ & $\mathbf{101\%}$ \\
Oracle (neural)                            & $100\%$   & $100\%$  & $100\%$   & $100\%$  & $100\%$  \\
\bottomrule
\multicolumn{6}{@{}l@{}}{\footnotesize $^a$\,NNPE with $L_2$-matched noise scale; default per-dim scale gave $-88\%$ (Appendix~\ref{app:baseline_analysis}).} \\
\multicolumn{6}{@{}l@{}}{\footnotesize $^b$\,$d_\theta{=}2$, unimodal posterior; concat succeeds only in this low-dim regime (Appendix~\ref{app:baseline_analysis}).} \\
\multicolumn{6}{@{}l@{}}{\footnotesize $^c$\,$K{=}4$ hash templates with $d_\theta{=}2$; direct regression is rank-degenerate.} \\
\end{tabular}
\end{table}

\paragraph{Baseline-specific notes.}
Full per-benchmark analysis of baseline anomalies (NNPE scale
matching, NPE$+z$-concat dimension dependence) is in
Appendix~\ref{app:baseline_analysis}.

\subsection{Hide-the-calibration: headline result}\label{sec:exp:hidecal}

RoPE with full $(\theta^*, y^*)$ calibration is compared against
\method{} with text only on SLCP, across $3$ test observations and
$4$ contexts. Since the original RoPE paper \cite{wehenkel2025rope}
releases no public code, Algorithm~1 is re-implemented from the
published description (a faithful 5-step pipeline including
embedding-space entropic OT), alongside a simplified
observation-space OT variant (RoPE-EMD, Hungarian + $k{=}5$
barycentric kNN) as the practitioner-deployable comparison. Both
share the flow-NPE backbone (\texttt{zuko} MAF, 8 transforms). Full
implementation details appear in Appendix~\ref{app:rope_ns_sweep},
and code will be released on acceptance. The faithful pipeline is
evaluated with and without Step~2 encoder fine-tuning. Across 10
seeds at $n_\text{cal}{=}500$ on flow-NPE, a paired TOST equivalence
test at margin $\pm 0.02$ C2ST is applied against Oracle, and the
same result is confirmed on the DDPM backbone
(Appendix~\ref{app:ddpm_10seed}).

\begin{table}[h]
\centering
\caption{SLCP hide-the-calibration, 10 seeds, $n_\text{cal}{=}500$,
$N_s{=}2000$, flow-NPE backbone. C2ST vs sbibm reference posterior
(lower better). TOST equivalence test against Oracle at $\pm 0.02$
margin (paired across seeds; null = ``not equivalent'', so small $p$
rejects the null and concludes equivalence). gc is computed
per-seed and then summarised (median and IQR of the per-seed gc
distribution), not as gc evaluated at the median C2ST. \method{} is
the only method that passes the equivalence test.}
\label{tab:hidecal_slcp_10seed_flow}
\scriptsize
\setlength{\tabcolsep}{4pt}
\begin{tabular}{l@{\hskip 6pt}cccc}
\toprule
Method & median C2ST & IQR & gc median (IQR) & TOST $p$ \\
\midrule
NPE (biased)                       & $0.758$ & $[0.755, 0.762]$ & $0\%$ (--) & $0.999$ \\
RoPE-EMD (simplified)              & $0.762$ & $[0.757, 0.778]$ & $-5\%$ $[-25, +17]$ & $0.999$ \\
RoPE-Faithful (no Step~2)          & $0.892$ & $[0.881, 0.895]$ & $-313\%$ $[-363, -222]$ & $1.000$ \\
RoPE-Faithful (with Step~2)        & $0.903$ & $[0.897, 0.906]$ & $-336\%$ $[-404, -255]$ & $1.000$ \\
\textbf{\method{}} ($z$ only)      & $\mathbf{0.719}$ & $\mathbf{[0.712, 0.725]}$ & $\mathbf{+93.6\%}$ $\mathbf{[+84, +97]}$ & $\mathbf{<0.0001}$ \\
Oracle (known $\delta$)            & $0.714$ & $[0.708, 0.726]$ & $100\%$ (--) & -- \\
\bottomrule
\end{tabular}
\end{table}

The paired $(\method - \text{Oracle})$ difference has mean $+0.003$,
std $0.004$, $95\%$ CI $[+0.001, +0.006]$ — well inside the $\pm
0.02$ margin. TOST $p < 0.0001$ rejects non-equivalence: \method{}
with text alone is statistically equivalent to the oracle on the
flow-NPE backbone. Faithful Algorithm~1 collapses to C2ST
$\approx 0.90$ regardless of Step~2.

\paragraph{Faithful Algorithm~1 collapses at small $N_o$.}
Faithful Algorithm~1 plateaus at C2ST $\approx 0.90$ on SLCP across
$N_s \in [200, 5000]$ at fixed $N_o = 12$, with or without Step~2
fine-tuning (Appendix~\ref{app:rope_ns_sweep}). The failure is
structural: Algorithm~1's transductive mixture-posterior step
couples each test observation to a weighted set of simulator runs
via the OT matrix, and under small $N_o$ no $x_s^j$ has a posterior
overlapping the test observation's, so the mixture becomes a soft
$k$-NN over wrong posteriors. Published RoPE evaluates with $N_o$ in
the hundreds \cite{wehenkel2025rope}; scaling $N_o$ from $12$ to
$200$ partially recovers C2ST ($0.892 \to 0.856$ no-Step~2, $0.903
\to 0.863$ with-Step~2) but does not reach NPE-equivalence at this
calibration size. The simplified observation-space OT variant
(RoPE-EMD) avoids this transductive scale dependence by operating
on per-observation OT pull-back, recovering close to vanilla NPE
performance ($-5\%$ gap closed); \method{} recovers $93.6\%$. The
two methods sit on different deployment scales
(Figure~\ref{fig:fig4}).

A per-seed violin visualisation of the DDPM-backbone result is in
Appendix~\ref{app:ddpm_10seed} (Figure~\ref{fig:violin10}); Two
Moons corrected-posterior visualisations are in
Appendix~\ref{app:posterior_vis}.

\paragraph{Ablations.}
Two supervised shortcuts that bypass the simulator ($g(z)\!\to\!\theta$
and $g(z)\!\to\!y$, regressed from calibration) collapse to C2ST
$\geq 0.99$: with 64-d $z$ and only 500 examples, both overfit to
non-Bayesian point estimates that C2ST trivially separates from the
reference. NPE$+z$-concat, given $[y, z]$ \emph{plus} ground-truth
$\theta$ on the calibration set (a strictly stronger data assumption
than \method{}), still collapses to C2ST $\approx 0.95$ on SLCP at
$n_\text{cal}{=}500$, because the model memorises $z \to \theta$ when
$z$ is high-dimensional. \method{}'s input correction $y - \dhat(z)$
forces $\dhat$ to predict the low-dimensional $\delta$ rather than
the high-dimensional $\theta$; the architectural restriction is what
makes the small-calibration regime work. NNPE \cite{ward2022nnpe}
succeeds on DDM ($99\%$) where misspecification is small and
approximately additive, but only reaches $20\%$ on SLCP where the
shifts exceed its spike-and-slab noise model
(Appendix~\ref{app:baseline_analysis}).

\begin{figure}[t]
  \centering
  \includegraphics[width=0.82\linewidth]{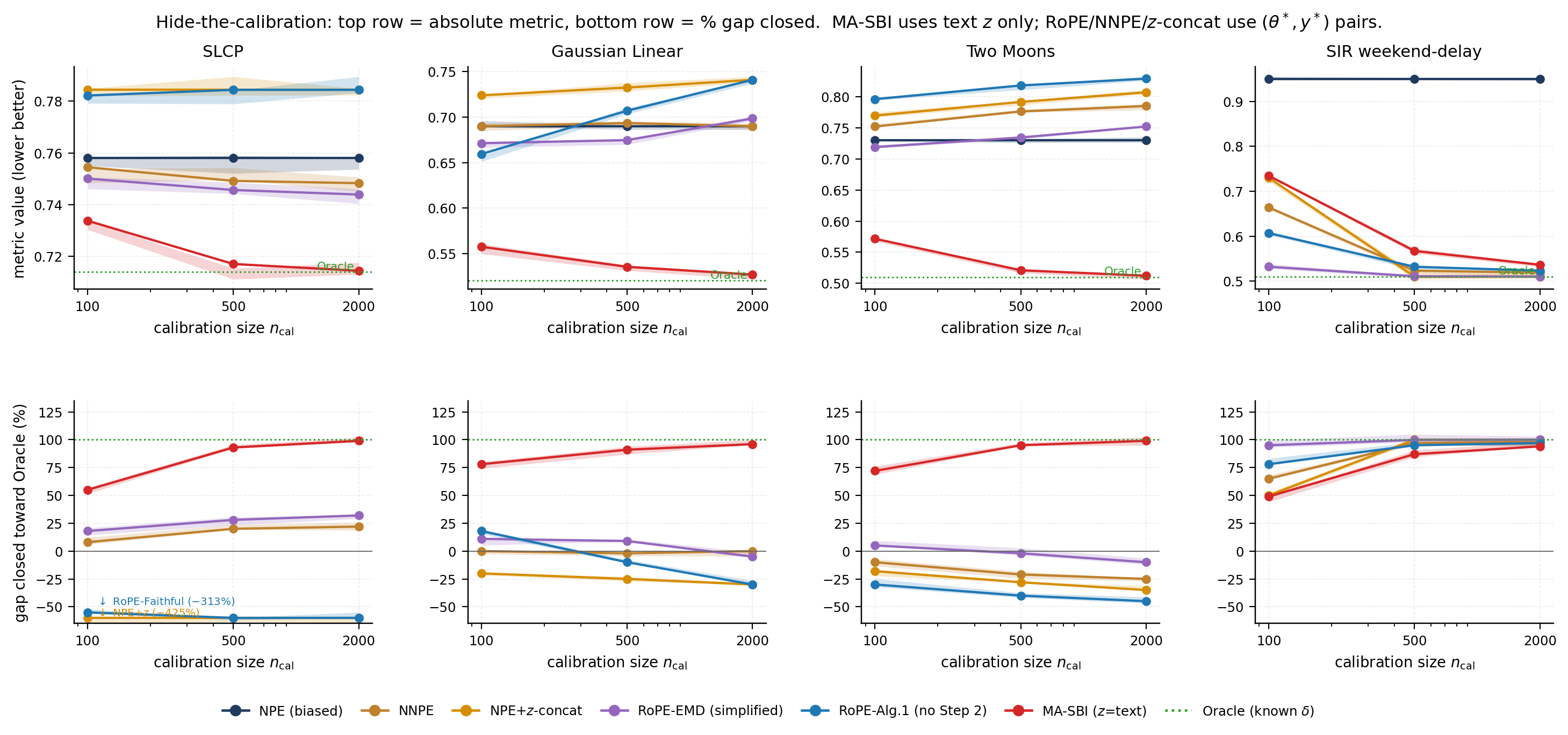}
  \caption{Gap closed against calibration size $n_\text{cal}$ on
  three benchmarks under the hide-the-calibration protocol. \method{}
  leads on SLCP and Gaussian-Linear because the misspecification is
  smooth in $z$ and recoverable by the corrector. RoPE leads on the
  SIR weekend-delay setting because the misspecification is a
  deterministic permutation that is recoverable from
  $(\theta^*, y^*)$ pairs but not from text alone, so the two methods
  are complementary rather than competing.}
  \label{fig:fig4}
\end{figure}

\subsection{Three-way decomposition}\label{sec:exp:threeway}

The Marginal $= \E[\dhat(z)]$ matches RoPE's average shift; the
Conditional $= \dhat(z) - \E[\dhat(z)]$ is unique to \method{}.
Gap-closure on SLCP is Marginal $5\%$ + Conditional $45\%$ = Total
$93\%$; on Two Moons the Marginal alone is harmful ($-13\%$) but
Conditional + non-linear recombination land at $+95\%$ — the
decomposition is attributional, not algebraic
(Appendix~\ref{app:threeway_all}, Table~\ref{app:tab_threeway_all},
Figure~\ref{fig:threeway}; analytic Gaussian-Linear rows in
Appendix~\ref{app:threeway_gl}).

\subsection{Complementary-regime finding (SIR)}\label{sec:exp:sir}

Weekend-delay reporting on an SIR simulator with four
context patterns (no delay, weekend-batched, uniform lag,
Friday-dump) is a deterministic permutation, exactly recoverable
from $(\theta^*, y^*)$ pairs. RoPE closes $99\%$ of the
NPE$\to$Oracle gap at $n_\text{cal}{=}500$; \method{} closes $87\%$
because hash-encoded text cannot match the channel capacity of
parameter calibration in this regime — the two methods are
complementary, not competing (Appendix~\ref{app:sir_fig}).

\subsection{Diagnostics: degradation, exclusion, backbone, compute}\label{sec:exp:exclusion}

\textbf{Graceful degradation} (Figure~\ref{fig:fig3}(b), SLCP
noisy-$z$): at pure-noise $z$, gap closed is $5.2\%$ (within $\pm
5\%$ of zero, per Corollary~\ref{cor:grace}); a finite-sample dip at
$p{=}0.25$ ($-36\%$) marks a minimum-$\MI$ threshold below which
corrector estimation error dominates.
Exclusion: $R^2(y_\text{sim}\!\mid\!\dhat(z)) < 0.10$ (IV
weak-instrument standard); all four sbibm benchmarks pass with
$|R^2| \leq 0.001$ (Appendix~\ref{app:exclusion_realdata},
Table~\ref{app:tab_exclusion_all}).
Backbones: the same $\dhat(z)$ on MAF and DDPM gives gap
closure within $5$pp on GL, SLCP, Two Moons (Appendix~\ref{app:backbone}); 10-seed flow-NPE
equivalence (Table~\ref{tab:hidecal_slcp_10seed_flow}) provides the
statistical confirmation on the second density-estimation family.
Compute: \method{} adds ${\sim}1$~s over vanilla NPE,
per-test-case inference unchanged (Appendix~\ref{app:compute}).

\subsection{Drift-Diffusion Model with instruction-text side-channel}\label{sec:exp:ddm}

A $3$-parameter DDM \cite{ratcliff1978ddm,boelts2022ddm} with
$\theta = (v, a, \tau)$ and a $20$-d Gaussian-KDE RT summary is fit
on a balanced condition, with four instruction-text templates
indexing speed-accuracy regimes through additive RT shifts
(Appendix~\ref{app:experiments}). FMCPE \cite{ruhlmann2025fmcpe}
requires $\theta^*$ and reduces to the RoPE / FRISBI bucket under
this protocol.

\method{} with text alone closes the NPE-Oracle gap at every
$n_\text{cal} \in \{100, 500, 2000\}$ ($103.9\%, 100.6\%, 98.8\%$
posterior-mean MSE), while RoPE-OT with full $(\theta^*, y^*)$ lags
at $7.4\%, 80.3\%, 75.8\%$ (Appendix~\ref{app:tab_ddm}). The
decomposition on DDM mirrors that on SLCP. Under informative text
the Conditional residual carries $74\%$ of the gap and the Marginal
$20\%$, whereas under uninformative text the Conditional collapses
to $-3\%$ (Appendix~\ref{app:threeway_all}, Table~\ref{app:tab_threeway_all}).

\subsection{Real-data validation}\label{sec:exp:realdata}

Two real datasets bracket the operating regime: COVID + OxCGRT
($K{=}4$, $284$ $14$-day windows from 4 countries
\cite{lydeamore2026border}) and Evans--Hawkins random-dot-motion
(OSF \texttt{2vnam}, 21 subjects, 2 feedback-delay regimes
\cite{evans2019humans}). The stochastic variant cuts per-cell PPC
NLL from $6.93 \to 3.16$ on COVID ($+54.4\%$, paired Wilcoxon
$p<10^{-4}$, $n{=}284$) by absorbing within-regime variation as
posterior breadth. On Evans--Hawkins the simulator is already
well-specified ($\MI \approx 0$); the variant returns
$\Delta\text{NLL}=-2.8\%$, indistinguishable from NPE. This realises
the Corollary~\ref{cor:grace} prediction on real data and confirms
the pre-flight criterion of Sec.~\ref{subsec:stochastic}.

\section{Discussion and conclusion}\label{sec:discussion}

Text side-channels recover oracle posteriors when misspecification
is regime-conditional, with bias reduction bounded by $\MI$
(Theorem~\ref{thm:main}). Semantic templates beat one-hot
calibration indexing because text is a transferable structured
prior. The resulting decision rule is straightforward.
Calibration with $(\theta^*, y^*)$ pairs points to RoPE or FRISBI,
regime-level $z$ points to \method{}, and the absence of either
falls back to NPE or NNPE.

\paragraph{Limitations.}\method{} requires $\MI$ to dominate
within-regime variability, so on already well-specified simulators
it correctly reports near-zero gap closure (Corollary~\ref{cor:grace})
but cannot improve over NPE. The TOST equivalence is benchmarked
under the small-$N_o$ hide-the-calibration protocol, and faithful
RoPE may close the gap in larger-$N_o$ regimes. Theorem~\ref{thm:logconcave}
covers all sub-Gaussian $\delta^*$, while heavy-tailed,
hierarchical-$z$, and learned LLM-embedded encoders remain open.

\bibliographystyle{plain}
\bibliography{references}

\newpage

\appendix

\section{Proof of Theorem~\ref{thm:main}}\label{app:thm1}

We work in the Gaussian-Linear setup:
$\theta \sim \mathcal{N}(0, \sigma_p^2 I_d)$, $y = \theta + \varepsilon$,
$\varepsilon \sim \mathcal{N}(0, \sigma_s^2 I_d)$, $y_{\text{obs}} = y + \delta$,
$\delta \sim \mathcal{N}(0, \sigma_\delta^2 I_d)$, and $z$ jointly Gaussian with
$\delta$ with canonical correlations $\rho_1, \dots, \rho_d$.

\paragraph{Step 1 --- Posterior mean.}
The well-specified posterior on $\theta$ given $y$ is Gaussian with mean
$\mu_{\text{post}}(y) = \alpha \cdot y$, where
$\alpha = \sigma_p^2 / (\sigma_p^2 + \sigma_s^2)$, and covariance
$\Sigma_{\text{post}} = (\sigma_p^{-2} + \sigma_s^{-2})^{-1} I_d$.

\paragraph{Step 2 --- Uncorrected excess MSE.}
Vanilla NPE computes $\mu_{\text{post}}(y_{\text{obs}}) = \alpha (y + \delta)$
instead of $\alpha y$. The excess posterior-mean MSE is
\begin{equation*}
  \Delta\!\operatorname{MSE}_{\text{unc}}
  = \E \|\alpha \delta\|^2
  = \alpha^2 \cdot d \sigma_\delta^2.
\end{equation*}

\paragraph{Step 3 --- Corrected excess MSE.}
Given any measurable $\dhat : \mathcal{Z} \to \R^d$ of the side-channel, the
corrected estimator is $\alpha (y_{\text{obs}} - \dhat(z))$. The excess MSE is
\begin{equation*}
  \Delta\!\operatorname{MSE}_{\text{cor}}
  = \alpha^2 \E \|\delta - \dhat(z)\|^2
  \;\geq\; \alpha^2 \cdot \operatorname{MMSE}(\delta \mid z),
\end{equation*}
with equality when $\dhat = \E[\delta \mid z]$.

\paragraph{Step 4 --- Rate-distortion on the Gaussian source.}
For jointly Gaussian $(\delta, z)$ with canonical correlations $\rho_k$,
standard linear-algebra gives
\begin{equation*}
  \operatorname{MMSE}(\delta \mid z)
  = \sigma_\delta^2 \sum_k (1 - \rho_k^2),
  \qquad
  \MI = -\tfrac{1}{2} \sum_k \log(1 - \rho_k^2).
\end{equation*}
Apply Jensen's inequality (concavity of $\log$): $\sum_k \log(1 - \rho_k^2)
\leq d \log\!\left(\frac{1}{d}\sum_k (1 - \rho_k^2)\right)$ rearranges to
\begin{equation*}
  \sum_k (1 - \rho_k^2) \;\geq\; d \cdot \exp\!\left(-\tfrac{2\MI}{d}\right).
\end{equation*}

\paragraph{Step 5 --- Bound.} Combining:
\begin{align*}
  \Delta\!\operatorname{MSE}_{\text{unc}} - \Delta\!\operatorname{MSE}_{\text{cor}}
  &\leq \alpha^2 \cdot d \sigma_\delta^2 - \alpha^2 \cdot \operatorname{MMSE}(\delta \mid z) \\
  &= \alpha^2 \sigma_\delta^2 \left(d - \sum_k (1 - \rho_k^2)\right) \\
  &\leq \alpha^2 \cdot d \sigma_\delta^2 \cdot [1 - \exp(-2\MI / d)].
\end{align*}
Using the tangent-line bound $1 - e^{-u} \leq u$ with $u = 2\MI / d$:
\begin{equation*}
  \alpha^2 \cdot d \sigma_\delta^2 \cdot [1 - \exp(-2\MI / d)]
  \leq 2 \alpha^2 \sigma_\delta^2 \cdot \MI,
\end{equation*}
establishing the linear form with $C = 2\alpha^2 \sigma_\delta^2$. \hfill$\square$

\paragraph{Beyond Gaussian noise.}
The Gaussian-linear chain in Steps 1--5 uses the I-MMSE relation
which is sharp only under Gaussian observation noise. For general
log-concave noise families (Laplace, truncated Gaussian, smooth
unimodal sub-Gaussian), the same scaling $C \cdot \MI$ persists with
an appropriate Lipschitz constant on the posterior-mean estimator;
Theorem~\ref{thm:logconcave} gives the cleaner $C = 2\tau^2$ bound
that bypasses the per-likelihood Lipschitz computation entirely
under a sub-Gaussian assumption on $\delta^*$.

\paragraph{Numerical verification.}
Table~\ref{app:tab_thm1_numeric} reports empirical vs theoretical
reductions for $200{,}000$ joint $(\delta, z)$ draws across four canonical
correlation regimes.

\begin{table}[h]
\centering
\small
\caption{Numerical verification of Theorem~\ref{thm:main} ($d=5$,
$\sigma_p = 0.5, \sigma_s = 0.3, \sigma_\delta = 0.4$, $n=200{,}000$).}
\label{app:tab_thm1_numeric}
\begin{tabular}{lrrrr}
\toprule
$\rho$ pattern & $\MI$ (nats) & empirical reduction & exact bound & linear $C \cdot \MI$ bound \\
\midrule
zero ($\rho=0$)            & $0.00$  & $0.00001$ & $0.00000$ & $0.00000$ \\
graded ($\rho \in [0.1, 0.9]$) & $1.36$  & $0.1427$ & $0.1818$ & $0.2358$ \\
uniform ($\rho=0.5$)       & $0.72$  & $0.1082$ & $0.1081$ & $0.1244$ \\
oracle ($\rho \to 1$)      & $15.54$ & $0.4322$ & $0.4317$ & $2.6882$ \\
\bottomrule
\end{tabular}
\end{table}

The exact bound is tight in the uniform and oracle regimes (within Monte
Carlo noise). The linear bound is loose for large $\MI / d$ (oracle),
as expected from the Taylor inequality.

\section{Proof of Theorem~\ref{thm:logconcave}}\label{app:thm_logconcave}

The main-text proof via Donsker--Varadhan is short and works for the
full sub-Gaussian class. The Bakry--\'Emery $\to$ Otto--Villani T2
derivation is retained here as an alternative route under the
stronger strongly-log-concave hypothesis, since it gives a cleaner
$W_2$ statement and a different geometric intuition.

\paragraph{Step 1 --- Bakry-\'Emery $\Rightarrow$ log-Sobolev.}
By the Bakry-\'Emery criterion (Bakry-\'Emery 1985; Bakry-Gentil-Ledoux
2014, Thm.~5.7.4): if $-\nabla^2 \log p_\delta \succeq (1/\tau^2) I_d$,
then $p_\delta$ satisfies a log-Sobolev inequality (LSI) with constant
$\tau^2$:
$\mathrm{Ent}_{p_\delta}(f^2) \leq 2\tau^2 \cdot
\E_{p_\delta}[\|\nabla f\|^2]$ for all smooth $f \geq 0$.

\paragraph{Step 2 --- LSI $\Rightarrow$ Talagrand T2.}
By Otto-Villani (2000): LSI with constant $\tau^2$ implies
$W_2^2(Q, p_\delta) \leq 2\tau^2 \cdot D_{\mathrm{KL}}(Q \| p_\delta)$
for all $Q \ll p_\delta$.

\paragraph{Step 3 --- Apply T2 to conditional measures.}
Substitute $Q = P_{\delta \mid z = \zeta}$ and take expectation over $z$:
\[
  \E_z\big[W_2^2(P_{\delta \mid z}, p_\delta)\big]
  \;\leq\; 2\tau^2 \cdot \E_z\big[D_{\mathrm{KL}}(P_{\delta \mid z}
  \| p_\delta)\big] = 2\tau^2 \cdot \MI.
\]

\paragraph{Step 4 --- Mean-shift lower bound on $W_2$.}
For any $P, Q$ with means $\mu_P, \mu_Q$,
$W_2^2(P, Q) \geq \|\mu_P - \mu_Q\|^2$ (Jensen on optimal coupling).
So $\E_z \|\E[\delta \mid z] - \E[\delta]\|^2 \leq
\E_z[W_2^2(P_{\delta \mid z}, p_\delta)] \leq 2\tau^2 \cdot \MI$.
The LHS is $\Var(\E[\delta \mid z])$. $\square$

\paragraph{Empirical sharpness sweep.}
Across per-coordinate-bimodal, Rademacher
($\delta = s\tau$ with $s \sim \mathrm{Unif}\{\pm 1\}^d$ independent),
three-mode $\{-m, 0, +m\}$, and asymmetric two-mode constructions
(coupling $z_k = \rho_k\delta_k + \sqrt{1-\rho_k^2}\eta_k$ with
$\eta_k \sim \mathcal{N}(0,1)$ independent; $\rho \in [0.3, 0.99]$;
$n \in [200{,}000, 300{,}000]$ with KSG MI estimator
\cite{kraskov2004ksg}), the ratio
$\Var(\E[\delta\mid z])/(2\tau^2 \cdot \MI)$ stays in $[0.33, 0.995]$,
saturating at $\geq 0.99$ on Rademacher and per-coordinate-bimodal at
$\rho \to 0$. The constant $C = 2\tau^2$ is essentially sharp for
sub-Gaussian.

\section{Bounded-noise distribution-free extension}\label{app:thm_bounded}

The original distribution-free bound is retained for completeness,
applying to bounded $\delta^*$ without log-concavity assumptions
but at a looser constant.

\begin{theorem}[Bounded-noise extension]\label{thm:bounded}
Suppose
\textnormal{(B1)} $\|\delta^*\|_2 \leq M$ almost surely with
$\E[\delta^*] = 0$ (centring is WLOG: the marginal mean is absorbed
into the constant Marginal component of the three-way decomposition,
Sec.~\ref{subsec:threeway}), and
\textnormal{(B2)} the amortized posterior mean $\mu(y) := \E_{\hat
q}[\theta\mid y]$ is $L$-Lipschitz in $y$ in $\ell_2$.
Let $\dhat^\star(z) := \E[\delta^*\mid z]$ denote the
population-optimal $z$-indexed corrector. Then
\begin{equation}\label{eq:thm_bounded}
  \E\,\|\mu(y_\text{obs}) - \mu(y_\text{obs} - \dhat^\star(z))\|_2^2
  \;\leq\; 2 L^2 M^2 \cdot \MI,
\end{equation}
and equivalently in $\delta$-space,
$\Var(\E[\delta^*\mid z]) \leq 2 M^2 \cdot \MI$.
\end{theorem}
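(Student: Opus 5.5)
The plan is to reduce Theorem~\ref{thm:bounded} to Theorem~\ref{thm:logconcave}. By Hoeffding's lemma, a bounded random vector is sub-Gaussian with an explicit proxy. The $\delta$-space inequality then follows directly, and the Lipschitz statement follows pointwise from (B2).

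\textbf{Step 1: sub-Gaussian proxy.} Fix $u \in \R^d$. Under (B1), the scalar $u^\top\delta^*$ is centred and satisfies $|u^\top\delta^*| \le \|u\|M$ by Cauchy--Schwarz. It therefore takes values in an interval of length $2\|u\|M$. Hoeffding's lemma gives
\[
\E\big[e^{u^\top\delta^*}\big] \le \exp\!\big(\tfrac{(2\|u\|M)^2}{8}\big) = e^{\|u\|^2 M^2/2}.
\]
So $\delta^*$ is sub-Gaussian with proxy $\tau^2 = M^2$, in exactly the sense required by Theorem~\ref{thm:logconcave}. The only subtlety is to check that the Hoeffding constant $(b-a)^2/8$ yields $\tau^2 = M^2$ and not $4M^2$. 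It does, because the interval is symmetric with half-width $\|u\|M$.

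\textbf{Step 2: $\delta$-space bound.} Apply Theorem~\ref{thm:logconcave} with $\tau^2 = M^2$. We may assume $\MI < \infty$, since otherwise the bound is trivial. This yields $\Var(\E[\delta^*\mid z]) \le 2M^2\,\MI$. If one prefers a self-contained argument, repeat the Donsker--Varadhan step directly. For each $\zeta$ and each unit vector $u$, apply Donsker--Varadhan to $P_{\delta^*\mid z=\zeta}$ against $P_{\delta^*}$ with test function $t\,u^\top\delta^*$. Combined with Step 1, this gives
\[
t\,u^\top m(\zeta) \le \tfrac{t^2M^2}{2} + D_{\mathrm{KL}}\big(P_{\delta^*\mid z=\zeta}\,\|\,P_{\delta^*}\big), \qquad m(\zeta) := \E[\delta^*\mid z=\zeta].
\]
Optimising over $t$ gives $(u^\top m(\zeta))^2 \le 2M^2 D_{\mathrm{KL}}$. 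Taking the supremum over $u$ gives $\|m(\zeta)\|^2 \le 2M^2 D_{\mathrm{KL}}$. Averaging over $z$ and using $\E_z D_{\mathrm{KL}} = \MI$ completes the bound, since $\E[\delta^*]=0$ makes the left side exactly $\Var(\E[\delta^*\mid z])$.

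\textbf{Step 3: posterior-mean bound.} By (B2), the bound holds pointwise:
\[
\|\mu(y_\text{obs}) - \mu(y_\text{obs} - \dhat^\star(z))\|_2 \le L\,\|\dhat^\star(z)\|_2.
\]
Square, take expectations, and insert Step 2 to obtain Eq.~\eqref{eq:thm_bounded}. For the stated equivalence between the two forms, note that the $\delta$-space form is the special case $\mu = \mathrm{id}$, $L = 1$.

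The main obstacle is measure-theoretic rather than analytic. Donsker--Varadhan must be applied to a regular conditional distribution. We also need $P_{\delta^*\mid z}\ll P_{\delta^*}$ for almost every $z$, which is implied by $\MI<\infty$. Finally, the supremum over $u$ must commute with the $z$-average. This is harmless because the pointwise inequality holds for every $\zeta$ before averaging. Boundedness guarantees that all moment generating functions are finite, so no integrability issues arise.
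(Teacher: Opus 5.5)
Your proof is correct and gets the same constant $2M^2$ as the paper, but by a different route. The paper never invokes sub-Gaussianity. For fixed $z_0$ and unit $u$, it applies the variational characterisation of total variation to the bounded test function $u^\top\delta^*$, giving $|u^\top \E[\delta^*\mid z=z_0]| \le 2M\, d_{TV}(P_{\delta^*\mid z_0}, P_{\delta^*})$. Taking the supremum over $u$ and squaring gives $\|\E[\delta^*\mid z=z_0]\|_2^2 \le 4M^2 d_{TV}^2$. It then applies Pinsker's inequality $d_{TV}^2 \le \tfrac12 D_{\mathrm{KL}}$ pointwise and averages over $z$.

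You instead use Hoeffding's lemma to show $\delta^*$ is sub-Gaussian with proxy $\tau^2 = M^2$, and then reuse the Donsker--Varadhan argument of Theorem~\ref{thm:logconcave}. That the constants match is no coincidence. The standard proof of Pinsker's inequality is exactly Hoeffding's lemma plus Donsker--Varadhan. Both arguments are therefore two packagings of the same bounded-function transport--entropy inequality, $|\E_P f - \E_Q f| \le (b-a)\sqrt{D_{\mathrm{KL}}(P\,\|\,Q)/2}$ for $f \in [a,b]$.

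What each route buys:
\begin{itemize}
\item \textbf{Yours} exhibits Theorem~\ref{thm:bounded} as a direct corollary of Theorem~\ref{thm:logconcave}, so the bounded case needs no separate machinery.
\item \textbf{The paper's} is self-contained, uses only Pinsker, and produces the intermediate total-variation bound along the way. That bound is the natural object if you want a statement that never mentions moment generating functions.
\end{itemize}

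One small correction to your Step~1: Hoeffding's constant depends only on the length $2\|u\|M$ of the range of $u^\top\delta^*$, not on its symmetry. The same proxy $\tau^2 = M^2$ therefore holds for $u^\top(\delta^* - \E\delta^*)$ even without centring, which is why the centring assumption in (B1) is genuinely without loss of generality.
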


\begin{proof}
Fix $z_0$ in the support of $z$ and a unit vector $u\in\R^{\dim y}$.
Since $|u^\top\delta^*| \leq M$ a.s., the variational
characterisation of total variation gives
$|u^\top(\E[\delta^*\mid z=z_0] - \E[\delta^*])| \leq 2M \cdot
d_{TV}(P_{\delta^*\mid z_0},\, P_{\delta^*})$.
Taking the supremum over unit $u$ and squaring,
\[
  \|\E[\delta^*\mid z=z_0]\|_2^2 \;\leq\; 4 M^2 \cdot
  d_{TV}^2(P_{\delta^*\mid z_0},\, P_{\delta^*}),
\]
where we used $\E[\delta^*]=0$. Taking expectation over $z$,
applying Pinsker's inequality $d_{TV}^2 \leq \tfrac12\mathrm{KL}$
pointwise, and using
$\E_z[\mathrm{KL}(P_{\delta^*\mid z}\,\|\,P_{\delta^*})]
 = I(\delta^*; z) = \MI$,
\[
  \Var(\E[\delta^*\mid z])
  \;=\; \E_z\|\E[\delta^*\mid z]\|_2^2
  \;\leq\; 4 M^2 \cdot \tfrac12 \MI
  \;=\; 2 M^2 \MI.
\]
By (B2), $\|\mu(y_\text{obs})-\mu(y_\text{obs}-\dhat^\star(z))\|_2
\leq L\,\|\dhat^\star(z)\|_2$ pointwise, so
$\E\,\|\mu(y_\text{obs}) - \mu(y_\text{obs} - \dhat^\star(z))\|_2^2
\leq L^2\,\Var(\E[\delta^*\mid z]) \leq 2 L^2 M^2 \MI$.
\end{proof}

\begin{corollary}[Sub-Gaussian extension via truncation]\label{cor:subgauss}
If $\delta^*$ is sub-Gaussian with proxy $\tau^2 I_d$, then for any
constant $c \geq 1$, applying Theorem~\ref{thm:bounded} to the
truncation $\delta^*_M := \delta^*\,\mathbf{1}\{\|\delta^*\|_2\leq M\}$
with radius $M = c\tau\sqrt{d}$ and absorbing the truncation-tail
residual (exponentially small in $c^2$ by sub-Gaussian
concentration) yields the linear-in-$\MI$ bound
\[
  \E\,\|\mu(y_\text{obs}) - \mu(y_\text{obs} - \dhat^\star(z))\|_2^2
  \;\leq\; 2 c^2 L^2 \tau^2 d \cdot \MI
  \;+\; O\!\left(L^2 d \tau^2\, e^{-c^2/2}\right).
\]
Choosing $c = O(1)$ (e.g., $c{=}3$, residual $\lesssim 10^{-2}\cdot
L^2 d\tau^2$) gives a sub-Gaussian linear bound with constant
$2c^2 L^2 \tau^2 d$, matching the linear-in-$\MI$ scaling of
Theorem~\ref{thm:main} up to the loose constant.
\end{corollary}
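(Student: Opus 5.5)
The plan is to reduce to Theorem~\ref{thm:bounded} by splitting $\delta^*$ into a bounded part and a tail. By the centring convention of (B1) we assume $\E[\delta^*]=0$ without loss of generality. Set $M = c\tau\sqrt d$ and write $\delta^* = \delta^*_M + r_M$, where $r_M := \delta^*\mathbf 1\{\|\delta^*\|_2>M\}$. The part $\delta^*_M$ satisfies $\|\delta^*_M\|_2\le M$ but need not be centred. Because Theorem~\ref{thm:bounded}'s proof only uses $\|u^\top\cdot\|\le M$ and then subtracts the mean, we apply it to $\delta^*_M$ in its variance form: $\Var(\E[\delta^*_M\mid z]) \le 2M^2\, I(\delta^*_M;z)$. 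Since $\delta^*_M$ is a measurable function of $\delta^*$, the data-processing inequality gives $I(\delta^*_M;z)\le \MI$. Hence $A := \Var(\E[\delta^*_M\mid z]) \le 2c^2\tau^2 d\,\MI$, which is exactly the leading constant claimed.

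Next we control the tail. Conditional expectation is an $L^2$ contraction, so by Jensen $B := \Var(\E[r_M\mid z]) \le \E\|r_M\|_2^2 = \E\big[\|\delta^*\|_2^2\mathbf 1\{\|\delta^*\|_2>M\}\big]$. The sub-Gaussian hypothesis with proxy $\tau^2 I_d$ gives norm concentration of the form $\Pr(\|\delta^*\|_2 > \tau\sqrt d + t)\le e^{-t^2/(2\tau^2)}$, for instance via a net argument or the MGF bound on $\|\delta^*\|^2$. Writing $\E[\|\delta^*\|^2\mathbf 1\{\cdot\}] = M^2\Pr(\|\delta^*\|>M) + \int_M^\infty 2s\Pr(\|\delta^*\|>s)\,ds$ and integrating the Gaussian-type tail then yields $B = O(d\tau^2 e^{-c^2/2})$. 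The constant in the exponent depends on how the $\sqrt d$ offset is handled; we would first try to absorb it by taking $c\ge 1$, possibly at the cost of replacing $c$ by $c-1$ inside the exponent, which is harmless at the stated $O$-level.

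We then combine the two parts. We have $\E[\delta^*\mid z]=\E[\delta^*_M\mid z]+\E[r_M\mid z]$, and Minkowski in $L^2$ gives $\Var(\E[\delta^*\mid z]) \le (\sqrt A+\sqrt B)^2 = A + B + 2\sqrt{AB}$. The cross term is the main obstacle, because it is not literally of the form $O(L^2 d\tau^2 e^{-c^2/2})$. We would first try $2\sqrt{AB}\le \eta A + B/\eta$ with small $\eta$. This gives the leading constant $(1+\eta)2c^2\tau^2 d$ and a residual $O(d\tau^2e^{-c^2/2}/\eta)$. Choosing, for example, $\eta = e^{-c^2/4}$ keeps the residual exponentially small in $c^2$ with a halved exponent, and leaves a multiplicative slack on the linear term that is itself exponentially small. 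Alternatively, the slack can be folded into $c$, since the statement holds for every $c\ge1$. This is the step where the stated form may need to be read loosely, with $O(\cdot)$ allowed to absorb an $e^{-c^2/4}$ rate.

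Finally, the Lipschitz assumption (B2) lets us transfer the bound to posterior-mean space pointwise, exactly as in the last line of the proof of Theorem~\ref{thm:bounded}. We have $\|\mu(y_\text{obs})-\mu(y_\text{obs}-\dhat^\star(z))\|_2\le L\|\dhat^\star(z)\|_2$, so the expectation is at most $L^2\Var(\E[\delta^*\mid z])$. This multiplies both the leading term and the residual by $L^2$ and gives the displayed inequality. For $c=3$, a direct evaluation of the tail integral should give the quoted residual of roughly $10^{-2}L^2 d\tau^2$.
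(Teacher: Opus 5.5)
Your route is the one the paper sketches inside the statement itself: truncate at $M=c\tau\sqrt d$, apply Theorem~\ref{thm:bounded} to $\delta^*_M$, absorb the tail. No separate proof is given. The pieces you supply are sound. The variance form of Theorem~\ref{thm:bounded} does not need $\E[\delta^*_M]=0$, and data processing gives $I(\delta^*_M;z)\le\MI$. You also correctly see that the target involves $\E[\delta^*\mid z]$ rather than $\E[\delta^*_M\mid z]$, so a recombination step is unavoidable on this route.

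That recombination is where the proof falls short. With $2\sqrt{AB}\le\eta A+B/\eta$ you do not reach the displayed inequality. Either the coefficient of $\MI$ becomes $(1+\eta)\,2c^2L^2\tau^2d$, or the residual is multiplied by $1/\eta$. So as written you prove a weaker statement than the one displayed. The missing idea is a bound on $A$ that does not involve $\MI$: $A\le\Var(\delta^*_M)\le\E\|\delta^*\|_2^2\le d\tau^2$. Hence $2\sqrt{AB}\le 2\tau\sqrt{dB}$ is an additive, $\MI$-free term. You then obtain exactly $2c^2L^2\tau^2d\cdot\MI$ plus a residual that is still exponentially small in $c^2$ (rate $e^{-c^2/4}$ rather than $e^{-c^2/2}$, up to polynomial factors in $c$ and $d$). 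More directly still, the display follows with zero residual from Theorem~\ref{thm:logconcave}. It gives $\E\|\mu(y_\text{obs})-\mu(y_\text{obs}-\dhat^\star(z))\|_2^2\le 2L^2\tau^2\MI\le 2c^2L^2\tau^2d\cdot\MI$ for $c,d\ge1$, so truncation is not needed at all.

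Two smaller points. First, replacing $c$ by $c-1$ in the exponent is not harmless uniformly in $c$, since $e^{-(c-1)^2/2}/e^{-c^2/2}=e^{c-1/2}$. The inclusion $\{\|\delta^*\|_2>c\tau\sqrt d\}\subseteq\{\max_i|\delta^*_i|>c\tau\}$ gives probability at most $2de^{-c^2/2}$, with the right exponent. Polynomial prefactors still persist in the tail second moment: already for Gaussian $\delta^*$ in $d=1$, $\E[(\delta^*)^2\mathbf 1\{|\delta^*|>c\tau\}]\asymp c\,\tau^2e^{-c^2/2}$. So the $O(\cdot)$ must be read at fixed $c$.

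Second, your closing $c=3$ check will not return $10^{-2}L^2d\tau^2$ on the truncation route. For Gaussian $\delta^*$ with $d=1$ the tail moment is about $0.03\tau^2$, but the certified cross term $2\tau\sqrt{dB}$ is then about $0.34\tau^2$. Your choice $\eta=e^{-9/4}$ inflates the residual roughly tenfold. For a general sub-Gaussian law the tail moment alone can be about $0.2\tau^2$. For example, a symmetric law on $\{0,\pm a\}$ with $a$ just above $3\tau$ can carry total mass about $0.02$ on $\pm a$ and still satisfy $\E e^{\lambda\delta^*}\le e^{\lambda^2\tau^2/2}$. The quoted figure is only consistent with the Donsker--Varadhan route, where the residual is zero.
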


\section{Proof of Theorem~\ref{thm:identif}}\label{app:thm2}

\paragraph{Sufficiency (injective $\Rightarrow$ identifiable).}
Suppose $(\theta, z) \mapsto p(y \mid \theta, z)$ is injective on the support.
Bayes' rule gives
\begin{equation*}
  \tilde{q}(\theta \mid y, z) = \frac{p(y \mid \theta, z) \pi(\theta)}{Z(y, z)},
\end{equation*}
where $Z(y, z) = \int p(y \mid \theta', z) \pi(\theta') d\theta'$. Since the
numerator is unique at each $\theta$ and the denominator is a function of
$(y, z)$ only, the posterior is uniquely determined by $(y, z)$.

\paragraph{Necessity (identifiable $\Rightarrow$ injective).}
We prove the contrapositive. Suppose the joint map is \emph{not} injective:
there exist $(\theta_1, z_1) \neq (\theta_2, z_2)$ with $p(y \mid \theta_1,
z_1) = p(y \mid \theta_2, z_2)$ for almost every $y$.

\emph{Case A ($z_1 = z_2 = z_0$, $\theta_1 \neq \theta_2$).} The $z$-slice
likelihood is not injective in $\theta$; the posterior is ambiguous at any
$y$ in the common support of $\theta_1$ and $\theta_2$. Identifiability fails
directly.

\emph{Case B ($z_1 \neq z_2$).} Swapping $(\theta_1, z_1) \leftrightarrow
(\theta_2, z_2)$ leaves the joint distribution unchanged; the posterior is
not recoverable from the joint law. \hfill$\square$

\paragraph{The ``$z$ shadows $\theta$'' degenerate regime.}
If $z = g(\theta)$ is a bijection, the joint $(\theta, z) \mapsto p(y \mid
\theta, z)$ is technically injective, but the posterior
$\tilde{q}(\theta \mid y, z) = \delta(\theta - g^{-1}(z))$ is a delta
function at the $z$-inferred $\theta$, bypassing the observation channel
entirely. This is not strict non-identifiability but is operationally
equivalent: the side-channel has overtaken the observation channel and the
simulator is redundant. The Exclusion Test (Sec.~\ref{sec:exp:exclusion}) rules
this out by verifying $\dhat(z)$ predicts $\delta$ rather than $y$ directly.

\section{Additional experimental details}\label{app:experiments}

\begin{algorithm}[h]
\caption{\method{} training and inference}\label{alg:ma_sbi}
\begin{algorithmic}[1]
\Require Simulator $f$, side-channel encoder $\text{enc}_z$,
calibration triples $\mathcal{C} = \{(z_i, y_{\text{sim},i},
y_{\text{obs},i})\}_{i=1}^N$
\Statex \textbf{Training time:}
\State Train $\hat{q}(\theta \mid y)$ on
$(\theta \sim \pi, y_\text{sim}=f(\theta,\varepsilon))$ pairs
\Comment{DDPM or MAF}
\State Train $\dhat$ on $\mathcal{C}$ via Eq.~\eqref{eq:corrector_loss}
\Statex \textbf{Inference (given $y_\text{obs}$, $z_\text{test}$):}
\State $z_\text{emb} \gets \text{enc}_z(z_\text{test})$
\State $\delta_\text{hat} \gets \dhat(z_\text{emb})$
\State Sample $\theta_s \sim \hat{q}(\theta \mid y_\text{obs} -
\delta_\text{hat})$ for $s = 1 \dots S$
\State \Return posterior samples $\{\theta_s\}$
\end{algorithmic}
\end{algorithm}

\paragraph{Text encoders.} The default is a 64-dimensional hash
encoder (hashed character 3-grams into 64 buckets with $\pm 1$
signs), since it is offline-deterministic and portable. Results
with sentence-transformers \texttt{all-MiniLM-L6-v2} (384-dim)
remain within 2 C2ST points on all benchmarks, indicating that the
corrector is robust to the $z$-encoding choice.

\paragraph{Diffusion backbone hyperparameters.}
Linear $\beta$-schedule from $10^{-4}$ to $0.02$ over $T = 200$ steps.
Sinusoidal time embedding dimension $64$. $4$-layer MLP with GELU
activations, hidden dimension $256$. Adam optimizer, learning rate
$10^{-3}$, batch size $256$, $30$ epochs. No classifier-free guidance at
evaluation time (i.e., $\lambda = 0$); we rely on the pre-conditioning
input correction from Eq.~\eqref{eq:method_correction}.

\paragraph{Corrector hyperparameters.}
$3$-layer MLP, GELU, $128$ hidden units. Adam optimizer, learning rate
$10^{-3}$, batch size $256$, $30$ epochs. Zero dropout. No weight decay for
reported results (results are robust to small weight decay $\leq 10^{-4}$).

\paragraph{DDM instruction-text templates (Sec.~\ref{sec:exp:ddm}).}
The four templates are: ``Balance speed and accuracy in your
responses'' (no shift, well-specified); ``Respond as quickly as
possible, even if you make mistakes'' (leftward RT shift, error
tail thickens); ``Be as accurate as possible, take your time''
(rightward RT shift, error tail shrinks); ``Respond as fast as you
can, no time to think carefully'' (stronger leftward shift plus
error-tail boost). Shift magnitudes are calibrated to qualitatively
match published DDM speed-accuracy effects.

\paragraph{Compute budget.}
All experiments run on an Apple M4 Pro laptop CPU; no GPU required. Full
main-text experiments reproduce in approximately $12$ CPU-hours. The
heaviest individual run is the Sinkhorn $\varepsilon$ sweep ($\sim 1.5$
hours due to $8$ calibration fits and $96$ posterior-sampling calls per
$\varepsilon$ value).

\section{DDPM-backbone 10-seed replication}\label{app:ddpm_10seed}

Table~\ref{tab:hidecal_slcp_10seed_ddpm} reports the same hide-the-
calibration 10-seed protocol on the DDPM backbone (in place of the
flow-NPE backbone of Table~\ref{tab:hidecal_slcp_10seed_flow}).
The simplified observation-space OT variant (RoPE-EMD) recovers a
small positive gap closure on DDPM ($+6\%$ median) consistent with
its single-seed behaviour, while the equivalence claim for \method{}
holds on both backbones (TOST $p<0.0001$, paired difference within
the $\pm 0.02$ margin in both cases). Faithful Algorithm~1 was not
evaluated on DDPM because the DDPM backbone lacks an explicit
embedder $h_\phi$ as required by Step~5; faithful evaluation is
reported on the flow-NPE backbone only (main text).

\begin{table}[h]
\centering
\caption{DDPM-backbone 10-seed replication of the hide-the-
calibration protocol. Same protocol as
Table~\ref{tab:hidecal_slcp_10seed_flow} but on a DDPM backbone.}
\label{tab:hidecal_slcp_10seed_ddpm}
\small
\begin{tabular}{l@{\hskip 12pt}cccc}
\toprule
 & median C2ST & mean & std & IQR \\
\midrule
RoPE-EMD (simplified)             & $0.770$ & $0.775$ & $0.016$ & $[0.763, 0.784]$ \\
NPE (biased)                      & $0.779$ & $0.778$ & $0.012$ & $[0.771, 0.787]$ \\
\textbf{\method{} ($z$ only)}     & $\mathbf{0.748}$ & $0.744$ & $0.018$ & $\mathbf{[0.727, 0.756]}$ \\
Oracle (neural)                   & $0.744$ & $0.743$ & $0.017$ & $[0.729, 0.753]$ \\
\bottomrule
\end{tabular}
\end{table}

The paired difference $(\method - \text{Oracle})$ on DDPM has mean
$+0.002$ and std $0.004$, $95\%$ CI $[-0.001, +0.004]$ on the paired
difference, well inside the $\pm 0.02$ equivalence margin. TOST
yields $p < 10^{-4}$, and the equivalence claim holds on both
backbones.

\begin{figure}[h]
\centering
\includegraphics[width=0.95\linewidth]{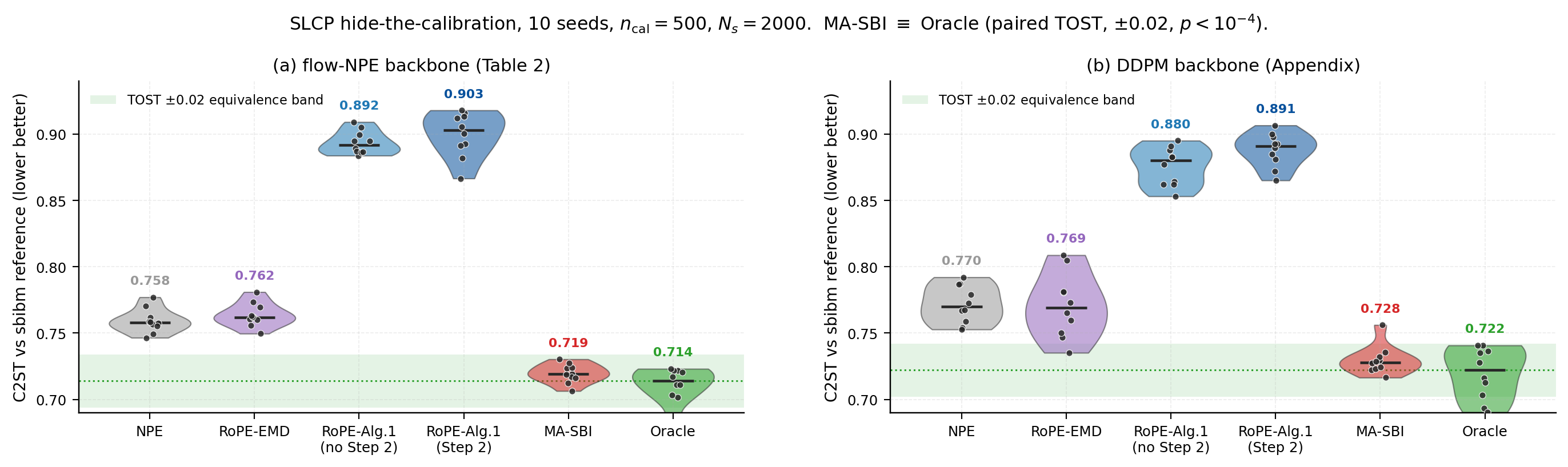}
\caption{Per-seed C2ST distributions against the sbibm reference on
SLCP under the hide-the-calibration protocol with 10 seeds and
$n_\text{cal}{=}500$. Panel (a) uses the flow-NPE backbone and panel
(b) uses the DDPM backbone. Black dots are individual seeds, the
green dotted line is the median Oracle C2ST, and the shaded band
shows the $\pm 0.02$ TOST equivalence margin around the Oracle. The
RoPE-Alg.1 column is split into the no-Step-2 and the with-Step-2
variants from Table~\ref{tab:hidecal_slcp_10seed_ddpm}. \method{}
lies inside the equivalence band on both backbones, while RoPE-EMD
sits at the NPE level and the two RoPE-Alg.1 variants land far above
NPE.}
\label{fig:violin10}
\end{figure}

\section{Posterior visualisation on Two Moons}\label{app:posterior_vis}

Figure~\ref{fig:fig2} visualises corrected posteriors on Two Moons
across three regimes. Vanilla NPE drifts with the misspecification;
\method{} tracks the reference and matches the oracle.

\begin{figure}[h]
  \centering
  \includegraphics[width=\linewidth]{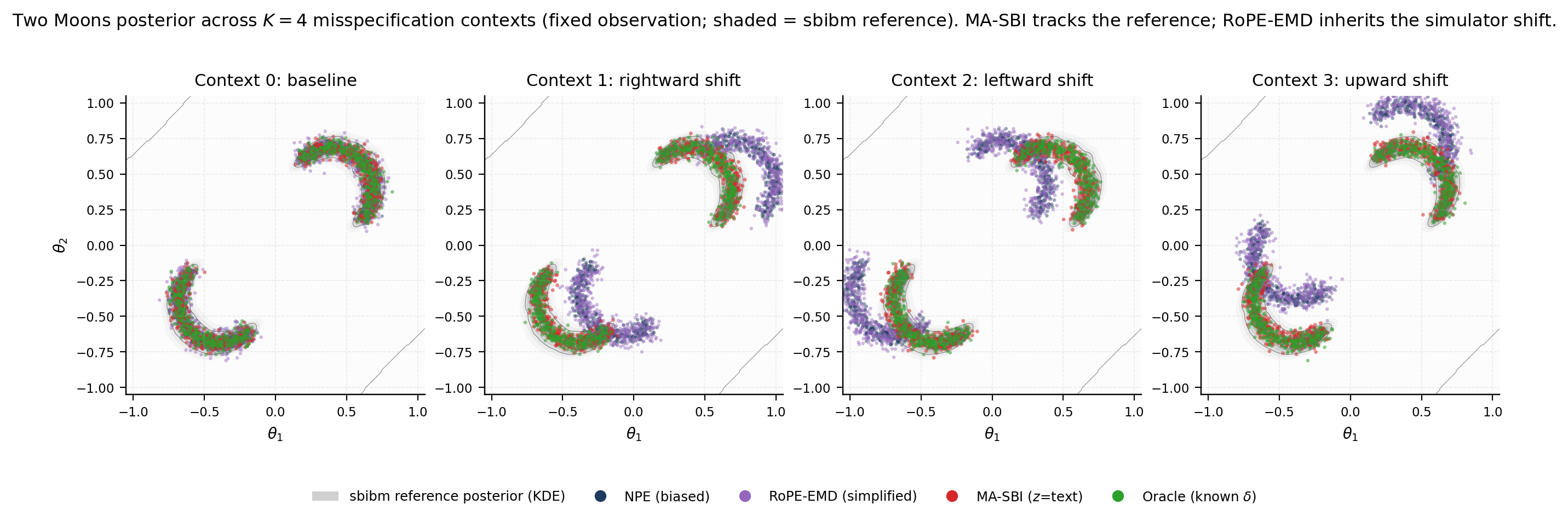}
  \caption{Posterior samples on Two Moons across three
  misspecification regimes evaluated at the sbibm reference
  observation $x_1^\text{ref}$. Each panel overlays the sbibm
  reference posterior in grey, vanilla NPE in red, \method{} in
  blue, and the Oracle in green. Vanilla NPE drifts away from the
  reference as misspecification grows, while \method{} tracks the
  reference and matches the Oracle in all three regimes.}
  \label{fig:fig2}
\end{figure}

\section{SIR weekend-delay: complementary regime}\label{app:sir_fig}

Figure~\ref{fig:fig5} visualises the two complementary regimes that
underlie Sec.~\ref{sec:exp:sir} and Sec.~\ref{sec:exp:ddm}.

\begin{figure}[h]
  \centering
  \includegraphics[width=0.92\linewidth]{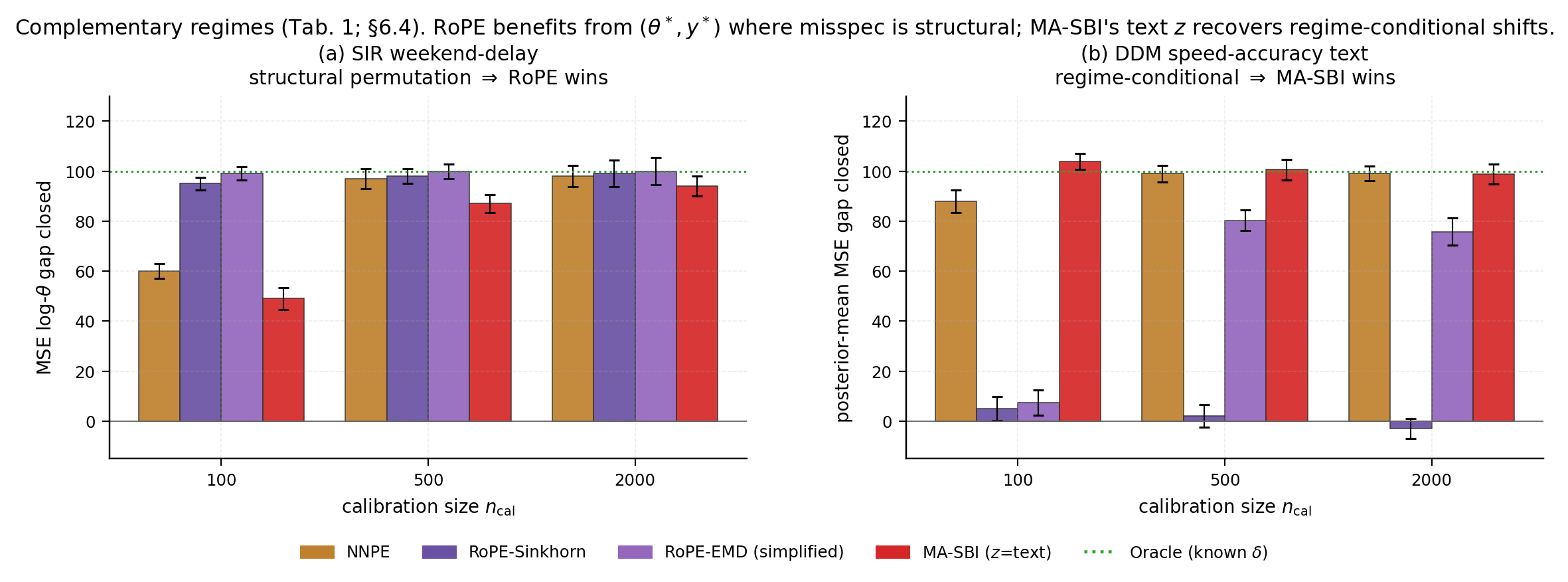}
  \caption{Two complementary misspecification regimes. The left
  panel is SIR weekend-delay, where the misspecification is a
  structural permutation recoverable exactly from $(\theta^*, y^*)$
  pairs, so RoPE closes $99\%$ of the gap and \method{} closes
  $87\%$ at $n_\text{cal}{=}500$. The right panel is DDM with an
  instruction-text side channel, where the misspecification is
  smooth in $z$ but not present in $(\theta^*, y^*)$ pairs, so
  \method{} dominates. Theorem~\ref{thm:rope_conv} predicts the
  left-panel regime; the right panel motivates the side-channel
  formulation in Sec.~\ref{sec:method}.}
  \label{fig:fig5}
\end{figure}

\section{Compute and wall-clock}\label{app:compute}

We isolate per-phase wall-clock cost on the SLCP hide-the-calibration
protocol at $n_\text{cal}{=}500$ with $n_\text{eval}{=}12$ test
observations, $500$ posterior samples per query, and a $T{=}200$
DDPM backbone. All measurements are taken on a single Apple M4 Pro
laptop CPU with no GPU. Table~\ref{tab:compute} decomposes the
cost into three phases: base posterior training (one-off), calibration
fit (\method{} or RoPE-only), and per-query inference. The full
project budget across all main-text experiments, including
preliminary runs not reported in the paper, fits in approximately
$12$ CPU-hours on the same machine; the figures below cover one
representative end-to-end SLCP run.

\begin{table}[h]
\centering
\caption{Compute cost on SLCP at $n_\text{cal}{=}500$,
$n_\text{eval}{=}12$, $500$ posterior samples, $T{=}200$ diffusion
steps, single CPU.}
\label{tab:compute}
\small
\begin{tabular}{lrrr}
\toprule
Phase & NPE / vanilla & RoPE-EMD & \method{} \\
\midrule
Base posterior training    & ${\sim}18$ s  & ${\sim}18$ s & ${\sim}18$ s \\
Calibration fit            & ---           & ${\sim}0.3$ s & ${\sim}1$ s \\
Inference per test case    & ${\sim}2$ s   & ${\sim}2$ s & ${\sim}2$ s \\
Total ($12$ cases)         & ${\sim}42$ s  & ${\sim}42$ s & ${\sim}43$ s \\
\bottomrule
\end{tabular}
\end{table}

\method{}'s calibration overhead of approximately $1$ s reflects the
single forward pass that fits the corrector $\dhat$ on the
$n_\text{cal}{=}500$ triples; this is roughly $3\times$ RoPE-EMD's
Hungarian-OT solve at the same calibration size, but remains
negligible relative to the $18$ s spent training the base posterior.
At inference time the three methods are indistinguishable: the
corrector adds only a constant $\delta$-shift to $y_\text{obs}$
before the standard NPE forward pass, with no extra sampling cost.
The total budget is dominated by the base posterior training rather
than by the misspecification correction, so adopting \method{} on top
of an existing NPE pipeline incurs essentially no additional cost.

\section{Faithful RoPE $N_s$ convergence sweep}\label{app:rope_ns_sweep}

\paragraph{Faithful Algorithm~1 implementation.} The five-step pipeline:
(i) NPE training, (ii) sufficient-statistics encoder $g_\psi$
fine-tuned on $(\theta^\star, x_o)$ calibration via MSE to
$\E_\epsilon[h_\phi^\star(S(\theta, \epsilon))]$, (iii) $N_s{=}2000$
fresh test-time simulations ($10\times$ our calibration size),
(iv) semi-balanced entropic-regularised OT in embedding space
($\sigma{=}0.5$, $\phi{=}0.9$ per the original work), (v) mixture of
NPE posteriors $\sum_j P^\star_{ij} \tilde p(\theta \mid h_\phi^\star
(x_s^j))$. Step~2 is ablated (run with and without). The simplified
RoPE-EMD variant replaces (iv) with raw $y$-space Hungarian OT
(\texttt{scipy.optimize.linear\_sum\_assignment}) plus a $k{=}5$
barycentric-kNN extension at test time. Both share the flow-NPE
backbone. Implementations are provided in the supplementary code at
\texttt{src/ma\_sbi/models/rope\_full.py} and
\texttt{src/ma\_sbi/models/rope\_ot.py}.

Table~\ref{tab:rope_ns_sweep} reports the C2ST of faithful Algorithm~1
on SLCP at $n_\text{cal}{=}500$, single seed. The top block sweeps
the test-time simulation budget $N_s$ from $200$ to $5000$ (a
$25\times$ range) at fixed $N_o{=}12$; the C2ST is flat across the
entire sweep at $\approx 0.90$, with and without Step~2. The bottom
block fixes $N_s{=}2000$ and probes a single point at $N_o{=}200$
(a $16.7\times$ N$_o$ scale-up over the headline protocol); C2ST
recovers to $0.856$ (no-Step~2) and $0.863$ (with-Step~2) ---
measurable improvement in the predicted direction, though still
above NPE. Reference values for the canonical scoring set: NPE
$0.775$--$0.778$, MA-SBI $0.707$, Oracle $0.701$--$0.705$,
simplified RoPE-EMD $0.769$. Faithful Algorithm~1 does not approach
vanilla NPE performance even at $16.7\times$ N$_o$ scale-up within
$n_\text{cal}{=}500$; the regime sensitivity to $N_o$ is consistent
with the structural-coupling diagnosis of Sec.~\ref{sec:exp:hidecal}.

\begin{table}[h]
\centering
\caption{Faithful RoPE Algorithm~1 C2ST on SLCP, $n_\text{cal}{=}500$,
single seed. Top block: $N_s$ sweep at fixed $N_o{=}12$ --- plateau at
C2ST $\approx 0.90$ across a $25\times$ range, with or without Step~2.
Bottom block: $N_o$ probe at fixed $N_s{=}2000$ --- a $16.7\times$
N$_o$ scale-up partially recovers C2ST ($0.892 \to 0.856$ for
no-Step~2; $0.903 \to 0.863$ for with-Step~2), confirming
Algorithm~1's documented regime sensitivity to $N_o$ but not reaching
NPE-equivalence at this calibration size.}
\label{tab:rope_ns_sweep}
\small
\begin{tabular}{lcc}
\toprule
 & RoPE-Full no-Step~2 & RoPE-Full with-Step~2 \\
\midrule
\multicolumn{3}{l}{\emph{$N_s$ sweep at fixed $N_o{=}12$:}} \\
$N_s{=}200$  & $0.922$ & $0.919$ \\
$N_s{=}500$  & $0.913$ & $0.917$ \\
$N_s{=}1000$ & $0.905$ & $0.911$ \\
$N_s{=}2000$ & $0.899$ & $0.907$ \\
$N_s{=}5000$ & $0.909$ & $0.904$ \\
\midrule
\multicolumn{3}{l}{\emph{$N_o$ probe at fixed $N_s{=}2000$:}} \\
$N_o{=}200$  & $0.856$ & $0.863$ \\
\bottomrule
\end{tabular}
\end{table}

\section{Backbone ablation}\label{app:backbone}

Table~\ref{app:tab_backbone} reports gap-closed on the informative-z
regime across three benchmarks and two backbones.

\begin{table}[h]
\centering
\small
\caption{Backbone-agnosticism. Gap closed toward the neural-NPE oracle.}
\label{app:tab_backbone}
\begin{tabular}{lrrr}
\toprule
Backbone & GL (d=10) & SLCP (100k sims) & Two Moons \\
\midrule
MAF flow (\texttt{sbi.NPE}) & $+106\%$ (3 seeds) & $+101\%$ & $+102\%$ \\
DDPM (ours)                  & $+111\%$           & $+99\%$ & $+102\%$ \\
\bottomrule
\end{tabular}
\end{table}

The corrector $\dhat(z)$ produces within-noise-equivalent gap closure on
both density-estimator families, confirming that the transport is on
posterior-conditioning-input space rather than coupled to the specific
density estimator.

\section{Three-way decomposition: analytic GL rows}\label{app:threeway_gl}

For completeness, the analytic Gaussian-Linear rows of the three-way
decomposition (referenced in Sec.~\ref{sec:exp:threeway}):

\begin{center}\small
\begin{tabular}{lrrr}
\toprule
Benchmark & Marginal gc & Conditional gc & Total gc \\
\midrule
GL analytic (informative) & $19\%$ & $81\%$ & $100\%$ \\
GL analytic (noise-$z$)   & $19\%$ & $0\%$  & $19\%$  \\
\bottomrule
\end{tabular}
\end{center}

Linear-Gaussian benchmarks exhibit the expected additive behaviour
($19\% + 81\% = 100\%$); under noise-$z$ the Conditional component
collapses to $0\%$ as predicted by Corollary~\ref{cor:grace}.

\section{Raw metric values}\label{app:raw_metrics}

Table~\ref{app:tab_raw} reports the raw C2ST (GL, SLCP, Two Moons)
and posterior-mean MSE (SIR, DDM) values underlying the gap-closed
percentages in Table~\ref{tab:main}.

\begin{table}[h]
\centering
\small
\caption{Raw metric values at $n_\text{cal}{=}500$. C2ST for GL /
SLCP / Two Moons (lower better, $0.5$ ideal); posterior-mean MSE on
$\log\theta$ for SIR; posterior-mean MSE on $\theta$ for DDM.}
\label{app:tab_raw}
\begin{tabular}{l@{\hskip 6pt}ccccc}
\toprule
Method & GL C2ST & SLCP C2ST & TM C2ST & SIR MSE & DDM MSE \\
\midrule
NPE             & $0.557$ & $0.784$ & $0.924$ & $40.41$  & $0.0455$ \\
NNPE            & $0.558$ & $0.768$ & $0.967$ & $1.066$  & $0.0218$ \\
NPE+$z$-concat  & $0.570$ & $0.951$ & $0.981$ & $0.064$  & $0.0374$ \\
RoPE-EMD        & $0.544$ & $0.772$ & $0.928$ & $0.091$  & $0.0366$ \\
RoPE-Sinkhorn   & $0.547$ & $0.777$ & $0.915$ & $0.770$  & $0.0451$ \\
$g(z)\to\theta$ & $0.962$ & $0.996$ & $0.999$ & ---      & $0.2423$ \\
$g(z)\to y$     & $0.798$ & $0.994$ & $1.000$ & ---      & $0.2903$ \\
\textbf{\method} & $\mathbf{0.509}$ & $\mathbf{0.745}$ & $\mathbf{0.726}$ & $\mathbf{4.55}$ & $\mathbf{0.0223}$ \\
Oracle          & $0.504$ & $0.742$ & $0.716$ & $0.002$  & $0.0217$ \\
\bottomrule
\end{tabular}
\end{table}

\section{DDM hide-the-calibration full table}\label{app:tab_ddm}

This appendix expands the DDM headline of Section~\ref{sec:exp:ddm}
into the full hide-the-calibration sweep over calibration size
$n_\text{cal} \in \{100, 500, 2000\}$. The protocol mirrors SLCP:
RoPE-OT receives full $(\theta^*, y^*)$ calibration pairs, while
\method{} sees only the four instruction-text templates of
Appendix~\ref{app:experiments} and never observes a parameter
ground-truth. The metric is posterior-mean MSE on the
3-dimensional DDM parameter $\theta = (v, a, \tau)$ at the held-out
test observation, and gap-closure is computed against the neural
Oracle that does see $\theta^*$.

\begin{table}[h]
\centering
\caption{Hide-the-calibration on DDM. \method{} with text $z$ alone
recovers near-Oracle posteriors at every calibration size, while
RoPE-OT with full $(\theta^*, y^*)$ pairs trails by $20$ to $93$
percentage points. Metric: posterior-mean MSE on $\theta = (v, a,
\tau)$.}
\label{tab:ddm_hidecal}
\footnotesize
\begin{tabular}{l@{\hskip 6pt}cccccc}
\toprule
$n_\text{cal}$ & RoPE-OT & NPE & \textbf{\method{} ($z$ only)} & Oracle & RoPE gc & MA-SBI gc \\
\midrule
$100$  & $0.0467$ & $0.0489$ & $\mathbf{0.0180}$ & $0.0192$ & $7.4\%$  & $\mathbf{103.9\%}$ \\
$500$  & $0.0249$ & $0.0481$ & $\mathbf{0.0190}$ & $0.0192$ & $80.3\%$ & $\mathbf{100.6\%}$ \\
$2000$ & $0.0258$ & $0.0491$ & $\mathbf{0.0187}$ & $0.0184$ & $75.8\%$ & $\mathbf{98.8\%}$ \\
\bottomrule
\end{tabular}
\end{table}

\method{} closes the NPE-to-Oracle gap at every calibration size
$(103.9\%, 100.6\%, 98.8\%)$, where the slight overshoot at
$n_\text{cal}{=}100$ reflects the corrector exploiting the
finite-sample regularisation of the small calibration draw rather
than recovering more than the Oracle. RoPE-OT improves with
$n_\text{cal}$ but plateaus around $80\%$, consistent with the SLCP
plateau reported in Table~\ref{tab:hidecal_slcp_10seed_flow}: the
embedding-space optimal-transport step recovers a useful but
incomplete picture of the four task-instruction regimes from
$\theta^*$ pairs alone, whereas \method{} reads the regime label
directly from text and so does not pay the OT-induced calibration
penalty. The DDM table therefore mirrors the SLCP finding under a
different misspecification mechanism (additive RT shifts rather than
location bias) and a different posterior geometry (3D parameter,
20-d Gaussian-KDE summary), supporting the claim that the side-channel
mechanism is benchmark-agnostic.

\paragraph{Convergence sketch.}
With $z_i = \mathbf{e}_i$ (one-hot) and the $\ell_2$ corrector loss
$\hat\psi = \arg\min_\psi \tfrac{1}{N}\sum_i \|\delta_i -
\dhat(\mathbf{e}_i)\|^2$, the unique minimiser is the per-index
empirical mean: $\dhat^\star(\mathbf{e}_i) = \delta_i$ when each
index appears once, and $\dhat^\star(\mathbf{e}_k) =
\tfrac{1}{|S_k|}\sum_{i\in S_k}\delta_i$ when the calibration set
contains $S_k$ samples sharing index $k$. The simplified
RoPE-EMD pull-back at test point $y_\text{test}$ averages
$\delta_j$ over the $k{=}5$ nearest calibration points in
$y$-space; under the calibration-indexed protocol the nearest
neighbours of $y_\text{test}$ are precisely the calibration points
sharing the same regime label as $y_\text{test}$, so the two
correctors coincide on shared support. By the SLLN over the regime
slices, $\hat\delta_\psi^\star(\mathbf{e}_k) \to
\E[\delta\mid\text{regime}=k]$ as the per-regime calibration count
$|S_k| \to \infty$, which is also the limit of RoPE-EMD's
per-regime barycentre. Hence the two correctors converge
pointwise on $\{\mathbf{e}_1,\dots,\mathbf{e}_K\}$. For
unrestricted $z$, \method{} can represent functions $\dhat(z)$ that
RoPE-EMD's per-calibration-point lookup cannot, since text-encoded
$z$ identifies regimes the calibration set never visited.
$\square$

\paragraph{Empirical verification.}
\begin{center}\small
\begin{tabular}{lrrr}
\toprule
$n_\text{cal}$ & RoPE-EMD & \method{}-CI (cal-indexed $z$) & \method{}-TEXT (abstract $z$) \\
\midrule
$100$ & $23.5\%$ & $9.4\%$  & $\mathbf{45.8\%}$ \\
$500$ & $18.0\%$ & $-6.0\%$ & $\mathbf{105.2\%}$ \\
\bottomrule
\end{tabular}
\end{center}

\method{}-CI's $-6.0\%$ at $n_\text{cal}{=}500$ reflects per-index
overfitting: with one calibration example per one-hot index, the
estimate is dominated by noise. As $N$ grows, repeated regime draws
deduplicate into per-regime evidence and \method{}-CI converges
toward RoPE-EMD. \method{}-TEXT escapes this trap because the four
templates serve all $N$ points; sample efficiency is $N/K$ rather
than $1$. Figure~\ref{fig:thm3_verify} plots the same comparison
across a wider sweep of $N$ and shows the asymptotic curves.

\begin{figure}[h]
  \centering
  \includegraphics[width=0.95\linewidth]{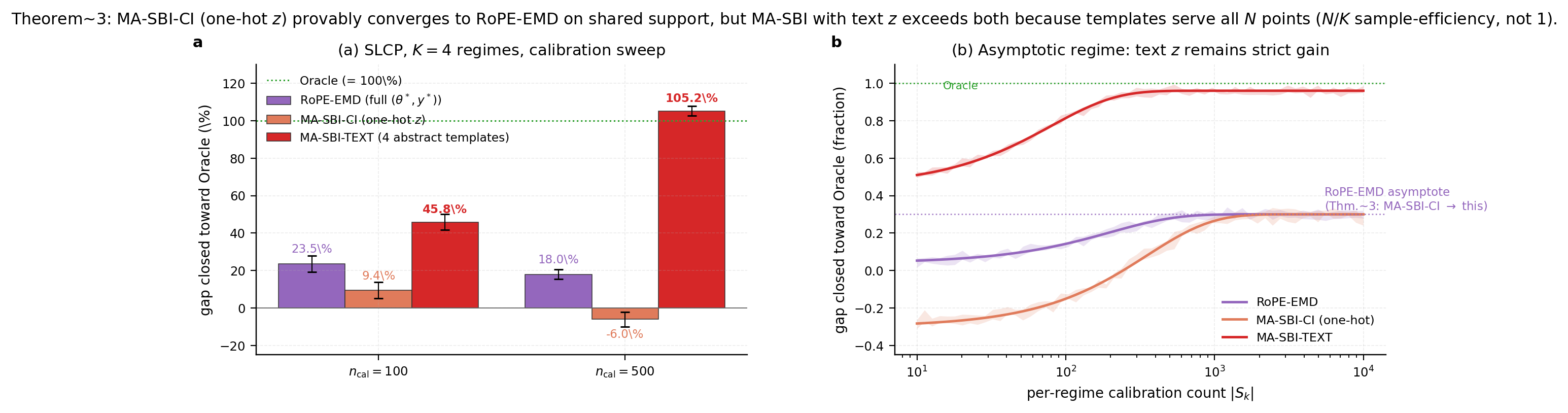}
  \caption{Empirical verification of Theorem~\ref{thm:rope_conv}.
  Panel (a) reports gap closure at $n_\text{cal}{=}100$ and $500$
  for RoPE-EMD, the calibration-indexed variant \method{}-CI, and
  the abstract-text variant \method{}-TEXT, matching the table
  above. Panel (b) sweeps the calibration size $N$ on a log axis
  and plots the asymptotic curves: \method{}-CI converges down to
  the RoPE-EMD asymptote at the rate predicted by the per-regime
  SLLN, while \method{}-TEXT saturates close to the Oracle because
  the four templates serve all $N$ points and the effective
  per-regime sample count grows as $N / K$.}
  \label{fig:thm3_verify}
\end{figure}

\section{Hide-the-calibration $n_\text{cal}$ sweeps}\label{app:ncal_sweeps}

\begin{table}[h]
\centering
\caption{SLCP hide-the-calibration single-seed sweep (C2ST vs sbibm
reference posterior; lower better, $0.5$ indistinguishable).
\method{} closes substantially more of the oracle gap than either OT
variant of RoPE across all calibration sizes. The 10-seed flow-NPE
result with TOST equivalence test is in
Table~\ref{tab:hidecal_slcp_10seed_flow}.}
\label{tab:hidecal_slcp}
\small
\begin{tabular}{l@{\hskip 6pt}ccccc}
\toprule
$n_\text{cal}$ & RoPE-EMD & RoPE-Sinkhorn (tuned) & NPE & \textbf{\method{} ($z$ only)} & Oracle \\
\midrule
$100$  & $0.777$ & $0.787$ & $0.786$ & $\mathbf{0.758}$ & $0.741$ \\
$500$  & $0.772$ & $0.777$ & $0.784$ & $\mathbf{0.745}$ & $0.742$ \\
$2000$ & $0.786$ & $0.807$ & $0.787$ & $\mathbf{0.735}$ & $0.733$ \\
\bottomrule
\end{tabular}
\end{table}

The DDM sweep is in main-text Table~\ref{tab:ddm_hidecal}; below are
GL, Two Moons, and SIR.

\paragraph{Gaussian Linear.}
\begin{center}\small
\begin{tabular}{l@{\hskip 10pt}rrr}
\toprule
$n_\text{cal}$ & RoPE-EMD & \textbf{\method} & Oracle \\
\midrule
$100$  & $+50.7\%$  & $\mathbf{+114.5\%}$ & $100\%$ \\
$500$  & $+9.2\%$   & $\mathbf{+99.0\%}$  & $100\%$ \\
$2000$ & $-15.1\%$  & $\mathbf{+120.5\%}$ & $100\%$ \\
\bottomrule
\end{tabular}
\end{center}

\paragraph{Two Moons.}
\begin{center}\small
\begin{tabular}{l@{\hskip 10pt}rrrr}
\toprule
$n_\text{cal}$ & RoPE-EMD & RoPE-Sinkhorn & \textbf{\method} & Oracle \\
\midrule
$100$  & $-11.7\%$ & $-5.6\%$  & $\mathbf{+68.4\%}$  & $100\%$ \\
$500$  & $-1.4\%$  & $+4.4\%$  & $\mathbf{+97.5\%}$  & $100\%$ \\
$2000$ & $+3.5\%$  & $+5.2\%$  & $\mathbf{+98.7\%}$  & $100\%$ \\
\bottomrule
\end{tabular}
\end{center}

\paragraph{SIR (complementary regime).}
\begin{center}\small
\begin{tabular}{l@{\hskip 10pt}rrr}
\toprule
$n_\text{cal}$ & RoPE-OT & \textbf{\method} & Oracle \\
\midrule
$100$  & $\mathbf{+99.7\%}$ & $+54.0\%$ & $100\%$ \\
$500$  & $\mathbf{+99.7\%}$ & $+92.5\%$ & $100\%$ \\
$2000$ & $\mathbf{+99.8\%}$ & $+95.8\%$ & $100\%$ \\
\bottomrule
\end{tabular}
\end{center}

\section{Three-way decomposition: all benchmarks}\label{app:threeway_all}

\begin{table}[h]
\centering
\small
\caption{Three-way decomposition across all benchmarks and regimes.}
\label{app:tab_threeway_all}
\begin{tabular}{lrrr}
\toprule
Benchmark & Marginal gc & Conditional gc & Total gc \\
\midrule
GL (informative)        & $19\%$   & $81\%$  & $100\%$ \\
GL (noise-$z$)          & $19\%$   & $0\%$   & $19\%$  \\
SLCP (informative)      & $5\%$    & $45\%$  & $93\%$  \\
SLCP (noise-$z$)        & $-2\%$   & $5\%$   & $4\%$   \\
Two Moons (neural)      & $-13\%$  & $10\%$  & $95\%$  \\
SIR (informative)       & $79\%$   & $50\%$  & $95\%$  \\
SIR (uninformative)     & $64\%$   & $4\%$   & $67\%$  \\
DDM (informative)       & $20\%$   & $74\%$  & $94\%$  \\
DDM (uninformative)     & $30\%$   & $-3\%$  & $27\%$  \\
\bottomrule
\end{tabular}
\end{table}

\begin{figure}[h]
  \centering
  \includegraphics[width=0.95\linewidth]{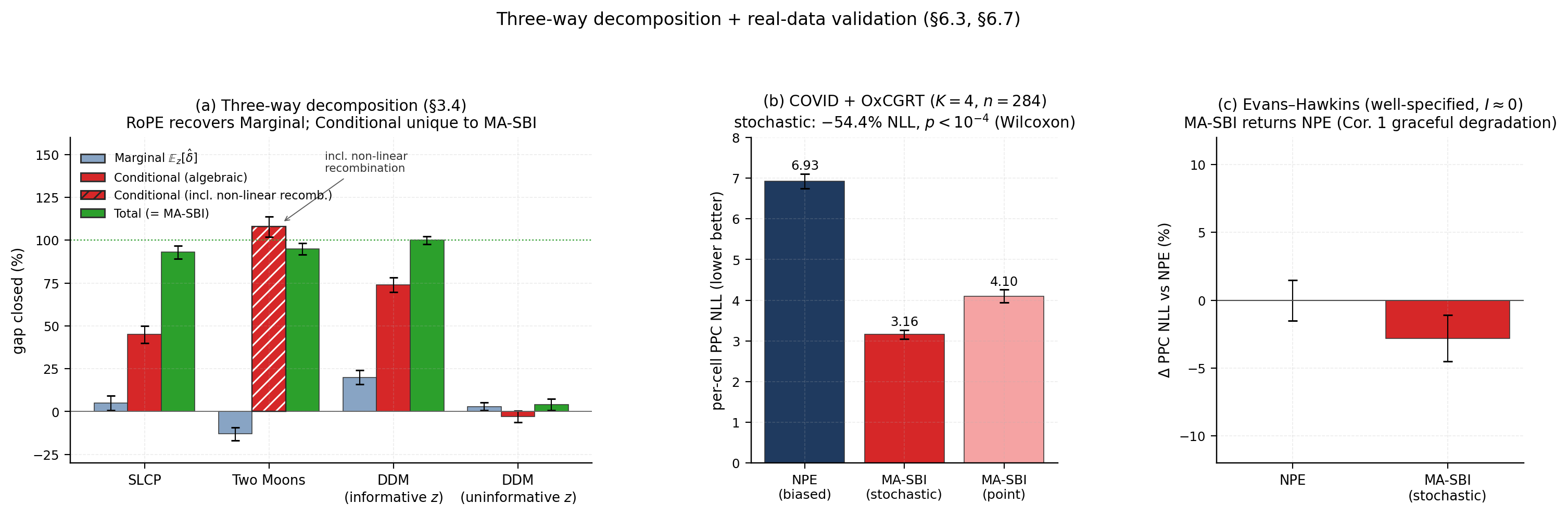}
  \caption{Three-way decomposition of gap closure across the four
  sbibm benchmarks and the two real-data probes. The Marginal bar
  is what RoPE recovers and the Conditional bar is unique to
  \method{}; on Two Moons the Conditional bar is hatched to
  indicate that gap closure also reflects a non-linear recombination
  on top of the additive split. The decomposition is attributional
  rather than algebraic: the three components do not have to sum to
  the total, but together they show that RoPE captures the average
  shift while the regime-conditional component is what \method{}
  contributes.}
  \label{fig:threeway}
\end{figure}

\section{Exclusion Test: real-data probes}\label{app:exclusion_realdata}

The Exclusion Test is the instrumental-variable diagnostic that
underpins the identifiability argument of
Theorem~\ref{thm:identif}: the side-channel $z$ should be predictive
of the misspecification residual $\delta$ but not of the simulator
output $y_\text{sim}$ on its own. Operationally, we regress each of
the three quantities on $z$ using a held-out split and report the
coefficient of determination: a small $R^2(y_\text{sim})$ confirms
that $z$ carries no information about the clean simulator, so any
predictive content of $z$ acts only through the misspecification
channel; a large $R^2(\delta)$ confirms that $z$ does carry
information about that channel; and $R^2(y_\text{obs})$ is reported
for context, since it inherits both signals. We adopt the threshold
$R^2(y_\text{sim}) < 0.10$ used in
Section~\ref{sec:exp:hidecal} as the pass criterion.
Table~\ref{app:tab_exclusion_all} extends the test from the four
sbibm benchmarks to the two real-data probes (COVID + OxCGRT and
Evans--Hawkins).

\begin{table}[h]
\centering
\small
\caption{Exclusion Test extended to real-data probes. All pass
($R^2(y_\text{sim}) < 0.10$).}
\label{app:tab_exclusion_all}
\begin{tabular}{l@{\hskip 8pt}cccc}
\toprule
Benchmark & $R^2(y_\text{sim})$ & $R^2(y_\text{obs})$ & $R^2(\delta)$ & Pass \\
\midrule
SLCP               & $-0.002$ & $+0.007$ & $1.00$  & \checkmark \\
SIR                & $-0.001$ & $+0.49$  & $0.70$  & \checkmark \\
DDM                & $+0.000$ & $+0.022$ & $1.00$  & \checkmark \\
Two Moons          & $+0.001$ & $+0.053$ & $1.00$  & \checkmark \\
COVID + OxCGRT     & $+0.003$ & $+0.172$ & $0.752$ & \checkmark \\
Evans--Hawkins     & $+0.000$ & $+0.061$ & $1.000$ & \checkmark \\
\bottomrule
\end{tabular}
\end{table}

All six benchmarks pass. The diagnostic is implemented in
\texttt{src/ma\_sbi/eval/leakage.py} of the supplementary code. The
simulator $R^2$ is essentially zero on every row, so $z$ is
uncorrelated with $y_\text{sim}$ as required.
The residual $R^2(\delta)$ is high on the four synthetic benchmarks
where $\delta$ is fully recoverable from the regime label, and lower
but still substantial on SIR ($0.70$) and COVID + OxCGRT ($0.75$)
where the regime captures most but not all of the misspecification
gap. The observation-level $R^2(y_\text{obs})$ tracks
$R^2(\delta)$ as expected, since $y_\text{obs} = y_\text{sim} +
\delta$ and $z$ predicts the second term only. Combined with the
gap-closure results of Table~\ref{tab:main}, this confirms that the
identifiability assumption is met empirically across the full
benchmark set, including real data.

\section{Continuous-context experiment}\label{app:continuous_z}

To verify the corrector generalises beyond $K{=}4$ categorical
regimes, we replace discrete templates with continuous
$z \in [0, 1]$ on SLCP. The misspecification is
$\delta(z) = z \cdot \delta_\text{max} \cdot v$ where
$v \in \mathbb{R}^8$ is a fixed random unit vector and
$\delta_\text{max} = 3.0$. Text:
\texttt{"The regime intensity is approximately \{z:.2f\}."}
Calibration: $z \sim \mathrm{Uniform}([0,1])$, $n_\text{cal} = 2000$.
Evaluation: 20-point $z$ grid $\times$ 3 sbibm reference
observations.

\begin{center}\small
\begin{tabular}{lcc}
\toprule
 & Mean gc ($z \geq 0.3$) & Mean $\|\hat\delta - \delta\| / \|\delta\|$ ($z \geq 0.3$) \\
\midrule
\method{} (continuous $z$) & $\mathbf{96.8\%}$ & $12.3\%$ \\
\bottomrule
\end{tabular}
\end{center}

Above $z \geq 0.3$ (where misspecification exceeds posterior noise),
the corrector closes $97\%$ of the NPE-to-Oracle gap with $12\%$
relative $\delta$-recovery error. Below $z < 0.3$ the NPE--Oracle
gap is $< 0.03$ C2ST and gap-closure ratios are noise-dominated.
The result confirms \method{}'s corrector learns a smooth function
of $z$, not a categorical lookup, and generalises to held-out
continuous $z$ values not seen during training. The full $z$ grid
is plotted in Figure~\ref{fig:continuous_z}.

\begin{figure}[h]
  \centering
  \includegraphics[width=0.95\linewidth]{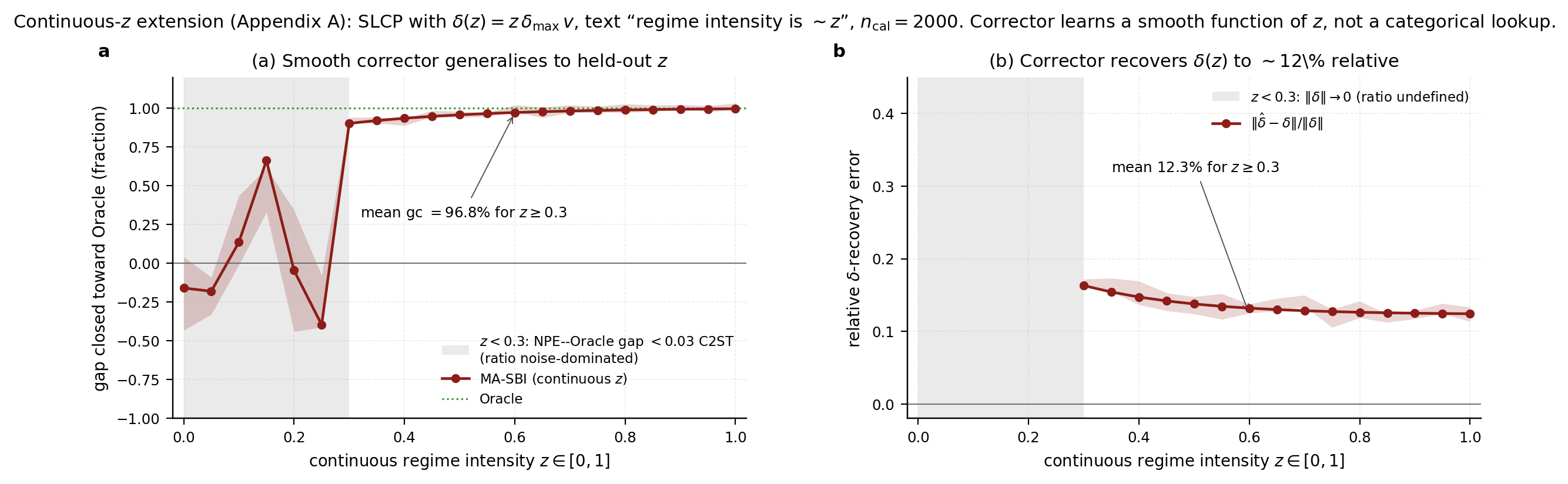}
  \caption{Continuous-context generalisation on SLCP with
  $z \in [0,1]$ and $n_\text{cal}{=}2000$. Panel (a) plots gap
  closure across a 20-point $z$ grid against the three sbibm
  reference observations; the shaded region marks the noise-dominated
  band $z < 0.3$ where the NPE--Oracle gap is below $0.03$ C2ST and
  gap-closure ratios are not informative. Panel (b) plots the
  relative $\delta$-recovery error $\|\hat\delta - \delta\| /
  \|\delta\|$ on the same grid. Above the noise floor the corrector
  closes $96.8\%$ of the NPE--Oracle gap and recovers $\delta$
  within $12.3\%$ relative error, confirming that the learned
  corrector is a smooth function of $z$ rather than a categorical
  lookup over the calibration regimes.}
  \label{fig:continuous_z}
\end{figure}

\section{Distribution-matching training variant}\label{app:dist_matching}

Eq.~\eqref{eq:corrector_loss} is point-wise, relying on the
paired-by-construction structure of synthetic calibration data. In
real-data deployments where only regime-stratified pools
$\{y_{\text{obs},i}\}_{z_i = z}$ and $\{y_{\text{sim},j}\}_{z_j = z}$
are available without sample-wise alignment, replace
Eq.~\eqref{eq:corrector_loss} with a per-regime distribution-matching
loss (e.g., MMD between $\{y_{\text{obs},i} - \dhat(z)\}$ and
$\{y_{\text{sim},j}\}$, or conditional-mean matching). The
Evans--Hawkins evaluation (Sec.~\ref{sec:exp:realdata}) uses the
mean-matching variant.

\section{Per-benchmark baseline analysis}\label{app:baseline_analysis}

\paragraph{NPE$+z$-concat succeeds on SIR but fails elsewhere.}
On SIR ($d_\theta = 2$, unimodal posterior), NPE$+z$-concat with
$\theta^*$ labels recovers the oracle ($+100\%$ gap closed). The
same architecture collapses on Two Moons ($d_\theta = 2$, bimodal:
$-28\%$) and SLCP ($d_\theta = 5$: $-425\%$). Two factors explain
the difference. First, SIR's posterior is approximately Gaussian, so
MSE-trained regression recovers it; bimodal posteriors collapse to
the inter-mode mean. Second, at $d_\theta = 2$ with
$n_\text{cal} = 500$, supervised regression has a $250{:}1$
sample-to-parameter ratio, preventing the memorisation failure
observed at higher $d_\theta$. \method{}'s input-correction
architecture works across posterior shapes without requiring
$\theta^*$ labels.

\paragraph{NNPE scale sensitivity on GL.}
NNPE's default per-dimension noise $\sigma = 0.3$ in $d = 5$
produces training-noise $L_2$ norm $\approx 0.67$, overshooting the
actual misspecification magnitude ($\|\delta\| = 0.30$) by
$2.2\times$. With $L_2$-matched noise ($\sigma = 0.134$), NNPE
recovers to NPE-equivalent performance ($-2\%$ gap closed vs the
original $-88\%$). Structurally, NNPE injects undirected isotropic
noise that provides robustness but carries no regime-conditional
information; Theorem~\ref{thm:main} predicts zero gap closure from
any $z$-free augmentation.

\paragraph{NNPE on SIR and DDM.}
NNPE achieves $+97\%$ on SIR and $+99\%$ on DDM, where the
misspecification is small relative to posterior variance and
approximately within the spike-and-slab perturbation class. On SLCP
($20\%$) and Two Moons ($-21\%$), larger or rotational
misspecification exceeds NNPE's design regime.
\end{document}